\documentclass{article}
\usepackage{iclr2027_conference,times}
\iclrfinalcopy

\usepackage{amsmath,amsfonts,bm}

\def\eqref#1{equation~\ref{#1}}

\def\1{\bm{1}}

\DeclareMathAlphabet{\mathsfit}{\encodingdefault}{\sfdefault}{m}{sl}
\SetMathAlphabet{\mathsfit}{bold}{\encodingdefault}{\sfdefault}{bx}{n}

\DeclareMathOperator*{\argmin}{arg\,min}

\usepackage{amsmath,amsfonts,amssymb,mathtools,dsfont}
\usepackage[table]{xcolor}
\usepackage{wrapfig}
\usepackage{algpseudocode}
\usepackage{pifont}
\newcommand{\cmark}{\ding{51}}
\newcommand{\xmark}{\ding{55}}
\usepackage{booktabs,tabularx,array,graphicx}
\newcolumntype{L}[1]{>{\raggedright\arraybackslash}p{#1}}
\usepackage{hyperref}
\usepackage{enumerate}
\newtheorem{theorem}{Theorem}

\newtheorem{prop}{Proposition}

\newtheorem{lemma}{Lemma}
\newtheorem{definition}{Definition}
\newtheorem{remark}{Remark}
\newtheorem{assump}{Assumption}
\renewcommand{\eqref}[1]{\textup{(\ref{#1})}}

\newcommand{\cO}{\mathcal{O}}

\newcommand{\inter}[1]{\ifblank{#1}{\inter}{\textnormal{int}(#1)}}

\newcommand{\bR}{\mathbb{R}}

\newcommand{\cR}{\mathcal{R}}

\newcommand{\Rm}{R_\textnormal{max}}

\newcommand{\gapK}{\mathcal{G}_K}

\newcommand{\proofend}{\hfill$\square$}

\title{\raggedright Sample-Efficient Reinforcement Learning with General Utilities via Neural Dual Critics}

\author{Ankur Naskar\\
Indian Institute of Science, India
\\
\texttt{ankurnaskar@iisc.ac.in}
\And
Vaneet Aggarwal\\
Purdue University, USA\\
\texttt{vaneet@purdue.edu}
}

\begin{document}
\maketitle
\fancyhead{}
\renewcommand{\headrulewidth}{0pt}

\begin{abstract}
    Reinforcement learning with general utilities optimizes nonlinear functionals of policy-induced occupancy measures. We develop a natural actor-critic method for concave RLGU objectives using an exact variational dual representation, whose existence follows from concavity and differentiability, with explicit constructions for a broad class of separable occupancy utilities. Rather than estimating the occupancy measure and then forming the policy-gradient pseudo-reward, our method learns this pseudo-reward directly as a function of the state-action pair using a neural critic. The critic is trained by projected stochastic updates, while multilevel Monte Carlo combines outputs across training horizons to control its bias at logarithmic expected cost. Our analysis separates conditional actor bias from estimator second moments and accounts explicitly for dual optimization, critic approximation, and neural linearization errors. Under policy-transfer, dual-stability, and local critic-regularity conditions, the method achieves $\widetilde{\mathcal O}(T^{-1/2})$ global convergence above policy, critic-class, and finite-width approximation errors using $\widetilde{\mathcal O}(T)$ expected environment transitions and reference samples. For a fixed critic representation, this yields $\widetilde{\mathcal O}(\epsilon^{-2})$ sampling complexity above the approximation floor, covering infinite state-action spaces and occupancy utilities beyond fixed finite vectors of returns.
\end{abstract}

\section{Introduction}
\label{s: introduction}

Standard reinforcement learning (RL) maximizes an expected discounted sum of rewards, an objective linear in the policy-induced occupancy measure. Many applications, however, depend on the occupancy distribution itself rather than on a fixed linear reward. For example, imitation learning seeks to match the learner's occupancy measure to an expert's target distribution, while exploration may encourage high visitation entropy \citep{ho2016generative,hazan2019provably}. Reinforcement learning with general utilities (RLGU) provides a unified formulation:

$$
    J(\theta)=F(\lambda_\theta),
$$

where $\lambda_\theta$ is the normalized discounted state-action occupancy measure and $F$ is a utility functional. We focus on concave utilities and generally parameterized policies. Although $J$ is generally nonconcave in $\theta$, concavity of $F$ in the occupancy measure provides the global comparison structure used in our analysis \citep{zhang2020variational,barakat2025global}.

The main statistical challenge is the policy-dependent pseudo-reward $r_\theta=\nabla_\lambda F(\lambda_\theta)$. Holding this reward fixed, the policy gradient takes the form
\begin{equation}\label{e: intro.gradient}
\nabla_\theta J(\theta) =(1-\gamma)\left.\nabla_\vartheta V^{\pi_\vartheta}(r_\theta)\right|_{\vartheta=\theta}.
\end{equation}
For small finite spaces, one can estimate the occupancy measure directly and then evaluate $r_\theta$. In larger spaces, however, occupancy approximation becomes an additional learning problem. Existing linear and parametric occupancy estimators yield $\widetilde{\mathcal O}(\epsilon^{-4})$ guarantees for stationarity \citep{barakat2023reinforcement} and global optimality \citep{barakat2026global}, respectively.

In this work, we study concave RLGU objectives through an exact variational dual representation, whose existence follows from our utility regularity assumptions. We give explicit dual constructions for a broad class of separable occupancy utilities. Instead of first estimating $\lambda_\theta$ and then forming $r_\theta$, we directly learn the dual pseudo-reward as a function of $(s,a)$ using a neural critic. This places the approximation requirement on the reward function actually needed by the actor, rather than on the full occupancy density, and allows the framework to extend naturally to infinite state-action spaces.

Our method and analysis hinge on two technical ideas.

\paragraph{Technical novelty: learning the dual reward.}
Building on Fenchel duality and variational policy gradients \citep{zhang2020variational,zahavy2021reward}, we learn the pseudo-reward through stochastic neural dual optimization (Section~\ref{s: dual.representation}). Our finite-time analysis uses dual curvature and feature conditioning to establish a stable recursion toward the constrained dual minimizer, separating dual-class approximation from neural linearization error. Table~\ref{tab: utility.examples} in Appendix~\ref{appendix: example.separable} gives reference-sampling estimators for the required dual gradients.

\paragraph{Technical novelty: controlling the bias of a trained critic.}
The critic's $\mathcal O(H^{-1/2})$ root-mean-square error above its approximation floor would require long training if used directly as actor bias. Adapting MLMC and neural actor-critic analyses \citep{blanchet2015unbiased,ganesh2025order,naskar2026hierarchical}, we apply randomized corrections to critic output functions at dyadic horizons, matching the longest run's mean at logarithmic expected training cost (Lemma~\ref{thm: MLMC.critic.bound}). Here MLMC controls a learned dual reward rather than finite scalarization coefficients. Separating actor bias from second moments in the KL-potential bound (Lemma~\ref{lem: opt.gap}) and adjusting the actor stepsize preserves the $\widetilde{\cO}(T^{-1/2})$ statistical rate with explicit finite-width error.

\paragraph{Contributions:} Our key contributions are as follows:

\begin{itemize}
    \item We develop a neural dual approach to concave RLGU that learns the utility-induced pseudo-reward directly, avoiding explicit occupancy estimation. For separable utilities, we derive explicit dual objectives and sampling formulas (Table~\ref{tab: utility.examples}, and Proposition~\ref{prop: separable.dual} in Appendix~\ref{appendix: example.separable}).

    \item We introduce a function-valued MLMC correction across critic-training horizons that inherits the bias of a long critic run at logarithmic expected training cost. We establish critic bounds and a natural-gradient analysis separating conditional actor bias from estimator second moments (Lemmas~\ref{thm: vanilla.critic.bound}, \ref{thm: MLMC.critic.bound}, and \ref{thm: NPG.bound}).

    \item Combining these ingredients, we obtain $\widetilde{\mathcal O}(\epsilon^{-2})$ expected sampling complexity above an explicit approximation floor for a fixed critic representation, including infinite state--action spaces (Theorem~\ref{thm: main.result}). Table~\ref{tab:comparison} contrasts this guarantee with existing RLGU results in representation, scope, and complexity.
\end{itemize}

\begin{table*}[t]
\centering

\caption{Comparison of representations, theoretical support for infinite state--action spaces, and sample complexity in RLGU.
Here,~$\widetilde{\mathcal O}$ suppresses logarithmic factors. Our bound holds above the approximation floor for a fixed critic representation under the stated policy-transfer, dual-stability, and local neural-regularity assumptions.~$^\dagger$ Assumes an oracle cost player whose implementation cost is excluded.~$^\ddagger$ Establishes first-order stationarity; all other entries concern global optimality.}
\label{tab:comparison}

\small
\setlength{\tabcolsep}{4pt}
\renewcommand{\arraystretch}{1.1}

\begin{tabular}{l c c c}
\toprule
\textbf{Reference}
& \textbf{Representation}
& \textbf{Infinite Spaces}
& \textbf{Sample Complexity} \\
\midrule

\cite{zhang2021convergence}
& Tabular
& \xmark
& $\widetilde{\mathcal{O}}(\epsilon^{-2})$ \\

\cite{zahavy2021reward}
& Tabular (UCRL2)
& \xmark
& $\widetilde{\mathcal{O}}(\epsilon^{-2})^{\dagger}$ \\

\cite{barakat2023reinforcement}
& Linear occupancy
& \xmark
& $\widetilde{\mathcal{O}}(\epsilon^{-4})^{\ddagger}$ \\

\cite{barakat2026global}
& Parametric occupancy
& \xmark
& $\widetilde{\mathcal{O}}(\epsilon^{-4})$ \\

\midrule
\rowcolor{gray!10}
\textbf{This work}
& \textbf{Neural dual critic}
& \cmark
& $\widetilde{\mathcal{O}}(\epsilon^{-2})$ \\
\bottomrule
\end{tabular}

\end{table*}

\subsection{Related Works:}

\paragraph{General-utility and convex reinforcement learning:}
\citet{hazan2019provably} study concave visitation objectives through planning, and \citet{zhang2020variational} develop the variational policy-gradient and Fenchel-dual formulation. \citet{zahavy2021reward} express convex RL as a policy-cost game, while \citet{geist} connect it to mean-field games and Frank-Wolfe updates. We build on these objective and duality foundations, but analyze a trained neural pseudo-reward.

\paragraph{Large-state RLGU and occupancy approximation:}
\citet{barakat2023reinforcement} provide variance-reduced tabular methods and a linear occupancy approximation method. \citet{barakat2025global} obtain global guarantees using maximum-likelihood occupancy estimation. Table~\ref{tab:comparison} distinguishes their representation assumptions and guarantees. Our dual-reward approach instead imposes approximation and stability conditions on the utility gradient rather than the occupancy density.

\paragraph{Concave utility RL.}
Finite-objective concave utility RL (CURL) optimizes a concave function of finitely many expected returns \citep{agarwal2023joint,agarwal2022curl,bai2022joint}. For finite state-action spaces, it is equivalent to concave RLGU: $M=|\mathcal S||\mathcal A|$ indicator rewards recover the occupancy vector through normalized returns \citep{ganesh2026breakingbiasbarrierconcave}. In infinite spaces, general RLGU is strictly more expressive, since finitely many occupancy integrals cannot represent every occupancy utility. Proposition~\ref{prop: finite.curl} in Appendix~\ref{appendix: CURL} gives the finite reduction and a continuous-state counterexample.

\paragraph{Variational and Fenchel-dual approaches:}
Our stochastic dual learning also relates to variational divergence estimation \citep{nowozin2016fgan}. The actor analysis adapts approximate NPG arguments from \citet{agarwal2021theory,ganesh2025order,ganesh2025sharper}, while the critic-output correction adapts MLMC ideas from \citet{blanchet2015unbiased,naskar2026hierarchical}. \citet{ganesh2026breakingbiasbarrierconcave} obtain the $\epsilon^{-2}$ exponent for finite-objective concave scalarization; our extension treats a function-valued dual reward and accounts for its optimization, approximation, and sampling errors.

\section{Problem Setup}
\label{s: problem.setup}

Let $\mathcal M=(\mathcal S,\mathcal A,\mathcal P,\rho,\gamma)$ be a discounted MDP on compact metric state and action spaces with their Borel sigma-algebras. Here $\mathcal P$ is a transition kernel, $\rho$ an initial-state distribution, and $\gamma\in(0,1)$. Finite spaces are included. A stationary policy $\pi$ induces $s_0\sim\rho$, $a_t\sim\pi(\cdot\mid s_t)$, and $s_{t+1}\sim\mathcal P(\cdot\mid s_t,a_t)$. Its normalized occupancy measure is
\begin{equation}\label{e: occupancy}
    \lambda_\pi(B)=(1-\gamma)\sum_{t=0}^\infty\gamma^t\mathbb P_\pi\big((s_t,a_t)\in B\big)
\end{equation}
for measurable $B\subseteq\mathcal S\times\mathcal A$, where $\mathbb P_\pi$ denotes the trajectory law induced by the policy $\pi$.

We consider differentiable policy densities $\pi_\theta(a\mid s)$ with respect to a fixed action reference measure and parameter-independent support, with $\theta\in\mathbb R^d$. The policy assumptions stated below hold along all update segments and at the comparison policy. Write $\lambda_\theta=\lambda_{\pi_\theta}$ and $d_\theta$ for its state marginal. For a convex domain $\mathcal D$ of probability measures containing all policy occupancies, our objective is
\begin{equation}\label{e: goal}
    F^*=F(\lambda_{\theta^*})=\max_{\theta\in\mathbb R^d}F(\lambda_\theta).
\end{equation}
We assume the maximum is attained; the same analysis applies to any fixed comparator with finite initial KL divergence. We now state the regularity required of the utility functional.
\begin{assump}[Concavity and Differentiability]\label{a: utility}
    $F:\mathcal D\to\mathbb R$ is concave and differentiable at each policy occupancy. Its functional derivative $r_\theta=\nabla_\lambda F(\lambda_\theta)$ represents directional derivatives along segments in $\mathcal D$, and the chain rule holds along differentiable policy paths. Moreover, $|r_\theta|_\infty\leq \Rm$ uniformly over the policies under consideration.
\end{assump}
Concavity provides the global comparison used in the convergence analysis, while bounded pseudo-rewards control the moments of the policy-gradient estimator. 

\subsection{Natural Policy Gradient}
\label{s: NPG}

We next express the utility gradient in a form suitable for natural policy optimization. Let $u_\theta(s,a)=\nabla_\theta\log\pi_\theta(a\mid s)$ denote the score. For a fixed bounded reward $r$, define
\begin{equation}\label{e: value}
    V^{\pi_\theta}(r) = \mathbb E_{\pi_\theta}\left[\sum_{t\geq0}\gamma^t r(s_t,a_t)\right]=\frac{1}{1-\gamma}\int rd\lambda_\theta.
\end{equation}
Applying the functional chain rule to $J(\theta)=F(\lambda_\theta)$ and then the likelihood-ratio identity gives
\begin{align}\label{e: policy.gradient}
    \nabla_\theta J(\theta)&=(1-\gamma)\left.\nabla_\vartheta V^{\pi_\vartheta}(r_\theta)\right|_{\vartheta=\theta}
    =(1-\gamma)\mathbb E_{\pi_\theta}\left[
    \sum_{t\geq0}\gamma^t r_\theta(s_t,a_t)
    \sum_{j=0}^t u_\theta(s_j,a_j)\right].
\end{align}
Here $r_\theta$ is held fixed when differentiating with respect to $\vartheta$. This is the general-utility policy-gradient identity \citep{zhang2020variational,kumar2024policy}; bounded rewards and scores justify the interchange with the discounted sum.

Natural policy gradient (NPG) preconditions this gradient using the Fisher matrix \citep{kakade2001natural}:
\begin{equation}\label{e: fisher}
    \mathcal I(\theta)= \mathbb E_{\lambda_\theta}[u_\theta u_\theta^\top],
    \qquad w^*(\theta) = \mathcal I(\theta)^{-1}\nabla_\theta J(\theta),
\end{equation}
where to find the NPG vector $w^*(\theta),$ we minimize the quadratic objective
\begin{equation}\label{e: compatibility.func}
    \ell(\theta,w) = \frac12 w^\top\mathcal I(\theta)w - w^\top\nabla_\theta J(\theta).
\end{equation}

The policy-gradient theorem~\citep{sutton1999policy} gives
$\nabla_\theta J(\theta)=\mathbb E_{\lambda_\theta}[u_\theta A_\theta^{(r_\theta)}]$,
where $A_\theta^{(r)}$ is the advantage function associated with a fixed reward $r$. Specifically, define
\begin{equation}\label{e: advantage}
    \begin{aligned}
        Q_\theta^{(r)}(s,a) &=\mathbb E_{\pi_\theta}\left[\sum_{t\geq0}\gamma^t r(s_t,a_t) \mid s_0=s,a_0=a\right],
        \\
        A_\theta^{(r)}(s,a) &= Q_\theta^{(r)}(s,a) - \int Q_\theta^{(r)}(s,a')\pi_\theta(da'\mid s).
    \end{aligned}
\end{equation}
Combining this identity with~\eqref{e: fisher}, $w^*(\theta)$ is the least-squares coefficient obtained by regressing $A_\theta^{(r_\theta)}$ onto the score $u_\theta$.

\subsection{Pseudo-reward via Exact Dual Representation}
\label{s: dual.representation}

We next characterize the pseudo-reward through a dual representation that avoids explicit occupancy estimation. Let $\mathcal R$ denote the bounded measurable functions on $\mathcal S\times\mathcal A$.

\begin{prop}[Exact Dual Representation]\label{prop: dual}
    Under Assumption~\ref{a: utility}, there exists a concave functional $\Psi:\mathcal R\to\mathbb R\cup{-\infty}$ such that, for every policy under consideration,
    \begin{equation}\label{e: dual.objective}
        F(\lambda_\theta) = \min_{r\in\mathcal R} \left\{ \mathbb E_{\lambda_\theta}[r]-\Psi(r) \right\}, \qquad r_\theta \in \argmin_{r\in\mathcal R}\left\{
        \mathbb E_{\lambda_\theta}[r]-\Psi(r)
        \right\}.
    \end{equation}
\end{prop}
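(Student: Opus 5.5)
The plan is to take $\Psi$ to be the concave conjugate of $F$ on $\mathcal D$, defined through the bilinear pairing $\langle r,\lambda\rangle=\mathbb E_\lambda[r]$. Concretely, for $r\in\mathcal R$ set
\begin{equation*}
\Psi(r)=\inf_{\lambda\in\mathcal D}\big\{\mathbb E_\lambda[r]-F(\lambda)\big\},
\end{equation*}
which may equal $-\infty$. As an infimum of affine functionals of $r$, $\Psi$ is concave, with values in $\mathbb R\cup\{-\infty\}$. No closedness or upper semicontinuity of $F$ on all of $\mathcal D$ is needed, because we only claim the identity at policy occupancies, where differentiability gives a supergradient.

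The first step is weak duality. For any $r\in\mathcal R$ and any policy occupancy $\lambda_\theta\in\mathcal D$, the definition of $\Psi$ gives $\Psi(r)\le \mathbb E_{\lambda_\theta}[r]-F(\lambda_\theta)$. Equivalently, $\mathbb E_{\lambda_\theta}[r]-\Psi(r)\ge F(\lambda_\theta)$. This also holds trivially when $\Psi(r)=-\infty$.

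The second step shows the bound is attained at $r=r_\theta$. For this I will prove the supergradient inequality
\begin{equation*}
F(\lambda)\le F(\lambda_\theta)+\mathbb E_{\lambda}[r_\theta]-\mathbb E_{\lambda_\theta}[r_\theta]\qquad\text{for all }\lambda\in\mathcal D.
\end{equation*}
Fix $\lambda\in\mathcal D$. Since $\mathcal D$ is convex, the segment $\lambda_t=\lambda_\theta+t(\lambda-\lambda_\theta)$, $t\in[0,1]$, stays in $\mathcal D$. The map $\phi(t)=F(\lambda_t)$ is concave on $[0,1]$. By Assumption~\ref{a: utility}, its right derivative at $0$ equals the directional derivative $\int r_\theta\, d(\lambda-\lambda_\theta)$. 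Concavity of $\phi$ then yields $\phi(1)\le\phi(0)+\phi'(0^+)$, which is exactly the inequality above. Rearranging gives $\mathbb E_\lambda[r_\theta]-F(\lambda)\ge \mathbb E_{\lambda_\theta}[r_\theta]-F(\lambda_\theta)$ for every $\lambda\in\mathcal D$. Taking the infimum, and noting that $\lambda=\lambda_\theta$ achieves it, we get $\Psi(r_\theta)=\mathbb E_{\lambda_\theta}[r_\theta]-F(\lambda_\theta)$. In particular $\Psi(r_\theta)$ is finite.

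Finally, note that $r_\theta\in\mathcal R$, since it is measurable and bounded by $\Rm$ under Assumption~\ref{a: utility}. Combining the two steps, the minimum over $\mathcal R$ is attained at $r_\theta$ with value $F(\lambda_\theta)$. This gives both parts of \eqref{e: dual.objective}.

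The main obstacle is making the directional-derivative step rigorous. It requires that the functional derivative in Assumption~\ref{a: utility} is a one-sided Gateaux derivative along segments into arbitrary points of $\mathcal D$, not only along policy paths, and that the pairing $\int r_\theta\,d\lambda$ is well defined for every $\lambda\in\mathcal D$. Both follow from the assumption as stated together with boundedness of $r_\theta$. The remaining care point is that $\Psi(r_\theta)$ must be finite, which the second step already establishes.
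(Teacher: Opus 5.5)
Your proposal is correct and essentially matches the paper's proof: the same definition $\Psi(r)=\inf_{\mu\in\mathcal D}\{\int r\,d\mu-F(\mu)\}$, weak duality read off that definition, and attainment at $r_\theta$ via the supergradient inequality. The only difference is that you justify the supergradient inequality explicitly, using the concave restriction $\phi(t)=F(\lambda_\theta+t(\lambda-\lambda_\theta))$ and monotone difference quotients, whereas the paper simply attributes this step to concavity of $F$.
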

\noindent\emph{Proof.} For all $r \in \cR,$ define
\[
    \Psi(r)
    := \inf_{\mu \in \mathcal D}
       \left\{\int r\,d\mu-F(\mu)\right\}.
\]
The functional $\Psi$ is concave because it is an infimum
of affine functionals of $r$. For every $r \in \mathcal R$,
\[
    \Psi(r)
    \le \mathbb E_{\lambda_\theta}[r]-F(\lambda_\theta), 
\]
and hence
\[
    F(\lambda_\theta)
    \le \mathbb E_{\lambda_\theta}[r]-\Psi(r).
\]

By Assumption~\ref{a: utility}, $r_\theta=\nabla_\lambda F(\lambda_\theta)\in \cR.$ Concavity of $F$ gives, for every
$\mu \in \mathcal D$,
\[
    F(\mu)
    \le F(\lambda_\theta)
       + \int r_\theta\,d(\mu-\lambda_\theta).
\]
Rearranging, we have
\[
    \int r_\theta\,d\mu-F(\mu)
    \ge \mathbb E_{\lambda_\theta}[r_\theta]
       -F(\lambda_\theta),
\]
with equality at $\mu=\lambda_\theta$. Therefore, we get
\[
    \Psi(r_\theta)
    = \mathbb E_{\lambda_\theta}[r_\theta]
      -F(\lambda_\theta).
\]
Combining these inequalities proves
\[
    F(\lambda_\theta)
    = \min_{r \in \mathcal R}
      \left\{\mathbb E_{\lambda_\theta}[r]-\Psi(r)\right\}
    \quad \text{and} \quad
    r_\theta \in \arg\min_{r \in \mathcal R}
      \left\{\mathbb E_{\lambda_\theta}[r]-\Psi(r)\right\}.
\]
This proves Proposition~\ref{prop: dual}.
\proofend

Throughout the remainder of the paper, $\Psi$ denotes a
chosen concave dual satisfying~\eqref{e: dual.objective}, which need not coincide with the construction used in the proof of Proposition~1.
The stability and sampling conditions required of this
dual are stated separately in Section~\ref{s: assumptions}.

Proposition~\ref{prop: dual} outlines the variational RLGU representation of \citet{zhang2020variational}. Its key advantage is that the unknown occupancy enters only through an expectation. Since the dual objective touches $F$ at $r_\theta$, differentiability gives
\begin{equation}\label{e: dual.policy.gradient}
    \nabla_\theta J(\theta) = (1-\gamma)\left. \nabla_\vartheta V^{\pi_\vartheta}(r_\theta) \right|_{\vartheta=\theta}.
\end{equation}
Thus, learning an exact dual minimizer is sufficient to recover the pseudo-reward required by the actor.

To learn this pseudo-reward, our critic framework uses a
finite reference measure $\nu$ on $\mathcal S\times\mathcal A$,
with $\nu_0=\nu(\mathcal S\times\mathcal A)>0$, to define
its $L^2(\nu)$ geometry and dual-gradient oracle.
We first illustrate the dual construction for a separable
utility class. For this class, assume additionally that
$\lambda_\theta\ll\nu$ and write
$p_\theta=d\lambda_\theta/d\nu$.
\begin{definition}[Separable Concave Utility]\label{a: separable.utility}
    We call a utility separable with respect to measure $\nu$ if it has the form
    \begin{equation}\label{e: separable.utility}
    F(\lambda_\theta) = C - \int f(p_\theta)\ d\nu- \int c p_\theta\ d\nu,
\end{equation}
where $f$ is proper, closed, and convex, and $c$ is measurable. At attained density values in the interior of its effective domain, $f$ is differentiable; at an attained endpoint, $f'$ denotes a finite one-sided derivative in $\partial f$. All integrals are finite, and
$|f'(p_\theta)+c|$ is bounded uniformly in $\theta$, $\nu$-almost everywhere.
\end{definition}
\begin{remark}\label{r: separable.examples}
    Separable concave utilities cover visitation-entropy and relative-entropy objectives used for exploration \citep{hazan2019provably,zhang2020variational}, KL-based occupancy matching for imitation \citep{zhang2020variational}, more general $f$-divergence objectives \citep{nowozin2016fgan}, and quadratic density matching. The bounded-derivative condition holds under the corresponding density-range conditions summarized in Table~\ref{tab: utility.examples} in the appendix; details are given in Appendix~\ref{appendix: example.separable}.
\end{remark}

\subsection{Neural Critic Estimation}
\label{s: neural.critic}

We represent the pseudo-reward by an $L$-layer neural network. For bounded features $\phi:\mathcal S\times\mathcal A\to\mathbb R^p$, let $x^{(0)}=\phi(s,a)$ and
\begin{equation}\label{e: neural.critic}
    x^{(\ell)}=m^{-1/2}\sigma(W^{(\ell)}x^{(\ell-1)}),\quad \ell=1,\ldots,L,
    \qquad r(s,a,\zeta)=m^{-1/2}b^\top x^{(L)}.
\end{equation}
Here $W^{(1)}\in\mathbb R^{m\times p}$, $W^{(\ell)}\in\mathbb R^{m\times m}$ for $\ell\geq2$, and $b$ is fixed. The trainable weights form $\zeta\in\mathbb R^{D_m}$, where $D_m=mp+(L-1)m^2$.

To control the network locally around its initialization, we impose the following common assumption:
\begin{assump}[Smooth Activation]\label{a: sooth.activation}
    The activation $\sigma$ is Lipschitz with a Lipschitz derivative.
\end{assump}
Assumption~\ref{a: sooth.activation} is satisfied by commonly used smooth approximations of ReLU such as Sigmoid, ELU, and GeLU ~\cite{Hendrycks2016GaussianEL, Clevert2015FastAA}. Moreover, this ensures that the network admits a controlled first-order linearization around its initialization, as used in NTK analyses \citep{jacot2018neural,ganesh2025order,Satheesh2026GlobalCO}.

To control the critic using NTK analysis around its initialization, fix $\zeta_0$ and restrict the parameters to a ball $\Omega$ of radius $R$ centered at $\zeta_0$, and define
\begin{equation}\label{e: linearized.r.function}
    \begin{aligned}
        g_0(s,a) & := \nabla_\zeta r(s,a,\zeta_0),\qquad\mathcal F_{\Omega,m} :={r_{\rm lin}(\cdot,\zeta):\zeta\in\Omega},
        \\
        r_{\rm lin}(s,a,\zeta)& :=r(s,a,\zeta_0)+g_0(s,a)^\top(\zeta-\zeta_0).
    \end{aligned}
\end{equation}

We then require the initialization, tangent features, and linearization error to satisfy
\begin{multline}\label{e: local.ntk}
    \sup_{(s,a)}|r(s,a,\zeta_0)|\leq C_0,
    \qquad \sup_{(s,a)}\|g_0(s,a)\|_2\leq C_\phi, 
    \\
    \text{and} \qquad \sup_{(s,a),\zeta\in\Omega}|r(s,a,\zeta)-r_{\rm lin}(s,a,\zeta)|\leq\eta_m.
\end{multline}
The term $\eta_m$ captures the finite-width error. For example, a uniform parameter-Hessian bound $C_H/\sqrt m$ gives $\eta_m\leq C_HR^2/(2\sqrt m)$ by Lemma~\ref{app.lem: NTK.bounds}.

\section{Algorithm}
\label{s: algorithm}

\begin{wrapfigure}{r}{0.49\textwidth}
\begin{minipage}{0.47\textwidth}
\hrule
\vspace{0.35em}
\textbf{Algorithm 1: Neural Dual-Functional NAC}
\vspace{0.25em}

\footnotesize
\begin{algorithmic}[1]
\Require $\theta_0,\zeta_0,K,B,H_{\max}=2^J,
\alpha,\{\beta_{\zeta,h}\},\beta_w,H_w$
\For{$k=0,\ldots,K-1$}
    \State Draw $N_k\sim{\rm Geom}(1/2)$ and set
    $H_k=2^{N_k}$ if $N_k\leq J$, otherwise $H_k=1$
    \State Initialize $\zeta_0^k=\zeta_0$
    \For{$h=0,\ldots,H_k-1$}
        \State Sample a fresh length-$B$ trajectory under $\pi_{\theta_k}$
        and an independent dual-oracle draw
        \State Update $\zeta_{h+1}^k$ by~\eqref{e: critic.update}
    \EndFor
    \State If $N_k\leq J$, retain the shared-run outputs
    $R_{k,1}$, $R_{k,2^{N_k-1}}$, $R_{k,2^{N_k}}$;
    otherwise retain $R_{k,1}$
    \State Form $r_{k,\mathrm{MLMC}}$
    by~\eqref{e: function .critic.MLMC}
    \State Initialize $w_0^k=0$
    \For{$h=0,\ldots,H_w-1$}
        \State Sample a fresh length-$B$ trajectory under $\pi_{\theta_k}$
        \State Compute $\widehat{\mathcal I}^{k,h}$ and
        $\widehat g^{k,h}$ by~\eqref{e: NPG.estimator}
        \State Update $w_{h+1}^k$ by~\eqref{e: NPG.update}
    \EndFor
    \State Set $w_k=w_{H_w}^k$ and
    $\theta_{k+1}=\theta_k+\alpha w_k$
\EndFor
\end{algorithmic}

\vspace{0.35em}
\hrule
\end{minipage}
\end{wrapfigure}

At outer iteration $k$, we hold $\pi_{\theta_k}$ fixed, learn a dual reward, estimate a natural-gradient direction, and update
\begin{equation}\label{e: actor.update}
\theta_{k+1}=\theta_k+\alpha w_k,
\qquad
k=0,\ldots,K-1.
\end{equation}
Policy trajectories are independently restarted from $\rho$. States within each trajectory follow the MDP and may be dependent. Conditional on the current policy, critic trajectories, NPG trajectories, and reference samples use independent randomness. One environment transition counts as one unit of trajectory cost.

\subsection{Critic Estimator}
\label{s: critic.subroutine}

For fixed $\theta_k$, the critic minimizes the dual objective
\[
    \mathcal E(\theta_k,\zeta)
    =
    \mathbb E_{\lambda_{\theta_k}}[r(\cdot,\zeta)]
    -
    \Psi(r(\cdot,\zeta)).
\]

We use the frozen tangent feature $g_0$ from~\eqref{e: linearized.r.function} in the critic update, while evaluating the nonlinear reward inside the dual derivative. The resulting error is controlled by the local linearization bound~\eqref{e: local.ntk}.

\paragraph{MLMC Critic Function.} Choose $H_{\max}=2^J$ and draw $N_k\in{1,2,\ldots}$ with $\mathbb P(N_k=j)=2^{-j}$. If $N_k\leq J$, run the critic for $2^{N_k}$ updates and retain the outputs at horizons $1$, $2^{N_k-1}$, and $2^{N_k}$; otherwise, run one update. Writing $R_{k,H}=r(\cdot,\zeta_H^k)$, define
\begin{equation}\label{e: function .critic.MLMC}
r_{k,\mathrm{MLMC}}
=
R_{k,1}
+
\mathbf 1_{{N_k\leq J}}2^{N_k}
\left(
R_{k,2^{N_k}} - R_{k,2^{N_k-1}} \right).
\end{equation}
All retained outputs are generated from the same critic run, so the dyadic levels share the required prefixes. The correction acts on critic functions and may lie outside the neural class; its bias and second moment are controlled in the analysis.

\paragraph{Vanilla Critic Subroutine.}
To estimate the dual-gradient term, assume access at each $\zeta\in\Omega$ to a random vector $Z_\Psi(\zeta)$ satisfying
\begin{equation}\label{e: dual.oracle}
    \mathbb E[Z_\Psi(\zeta)\mid\zeta]
    =
    \int
    g_0(s,a)\,
    \nabla_r\Psi(r(\cdot,\zeta))(s,a)
    \,d\nu(s,a)
    \quad \text{and} \quad
    \mathbb E[\|Z_\Psi(\zeta)\|_2^2\mid\zeta]
    \leq
    S_\Psi^2.
\end{equation}
For the utilities in Table~\ref{tab: utility.examples}, one or two independent reference draws provide such an estimator; explicit formulas are given in Proposition~\ref{prop: separable.dual}. Exact integration is a special case, and reference draws are counted separately from environment transitions.

Initialize $\zeta_0^k=\zeta_0$. For $h=0,1,\ldots$, generate a fresh length-$B$ trajectory under $\pi_{\theta_k}$ and an independent dual-oracle draw, and update
\begin{equation}\label{e: critic.update}
\zeta_{h+1}^k
=
\Pi_\Omega
\left(
\zeta_h^k-\beta_{\zeta,h}\widehat v^{k,h}
\right),
\quad \text{where} \quad
\widehat v^{k,h}
=
(1-\gamma)\sum_{t=0}^{B-1}\gamma^t
g_0(s_t^{k,h},a_t^{k,h})
-
Z_\Psi(\zeta_h^k).
\end{equation}
Here $\beta_{\zeta,h}=\frac{2}{\mu(h+2)}$ with $\mu=\mu_c\kappa$, where $\mu_c$ and $\kappa$ are defined in Section~\ref{s: assumptions}. The decreasing stepsize gives uniform control across the dyadic horizons used by the MLMC estimator, while projection keeps the critic within the region covered by~\eqref{e: local.ntk}.

\subsection{NPG Estimator}
\label{s: npg.estimator}

Hold the sampled function $r_{k,\mathrm{MLMC}}$ fixed throughout. Start at $w_0^k=0$. At each of $H_w$ iterations, generate a fresh length-$B$ trajectory, independent of critic training, and compute
\begin{align}\label{e: NPG.estimator}
\widehat{\mathcal I}^{k,h} & =(1-\gamma)\sum_{t=0}^{B-1}\gamma^t u_{\theta_k}(s_t^{k,h},a_t^{k,h})u_{\theta_k}(s_t^{k,h},a_t^{k,h})^\top,
\\
\widehat g^{k,h} & =(1-\gamma)\sum_{t=0}^{B-1}\gamma^t r_{k,\mathrm{MLMC}}(s_t^{k,h},a_t^{k,h})\sum_{j=0}^t u_{\theta_k}(s_j^{k,h},a_j^{k,h}).\nonumber
\end{align}
Apply the stochastic quadratic update
\begin{equation}\label{e: NPG.update}
w_{h+1}^k=w_h^k-\beta_w\big(\widehat{\mathcal I}^{k,h}w_h^k-\widehat g^{k,h}\big),
\qquad w_k=w_{H_w}^k.
\end{equation}
The Fisher and gradient estimates may use the same trajectory. Conditional independence across iterations yields a linear mean recursion, so the actor bias depends on the \emph{mean} critic error while the second-moment analysis captures the full MLMC randomness.

\section{Assumptions and Main Result}
\label{s: assumptions}

Before we state our main result in Theorem~\ref{thm: main.result}, we present our key assumptions.

\begin{assump}[KL Potential]\label{a: initial.kl}
The comparison policy $\pi_{\theta^*}$ satisfies
\[        
    \Phi_0:=
        \mathbb E_{s\sim d_{\theta^*}}
        \mathrm{KL}\big(
        \pi_{\theta^*}(\cdot\mid s)
        \|
        \pi_{\theta_0}(\cdot\mid s)
        \big)
        <\infty.
\]
\end{assump}
%

\begin{assump}[Score Function]\label{a: score.function}
For all relevant parameters and state-action pairs,
\[        
    \|u_\theta(s,a)\|_2\leq G_1,\qquad
    \|u_\theta(s,a)-u_{\theta'}(s,a)\|_2\leq G_2\|\theta-\theta'\|_2.
\]
\end{assump}
%

\begin{assump}[Fisher Non-Degeneracy]\label{a: Fisher.degenrate}
    There exists $\mu_F>0$ such that
    $\mathcal I(\theta)\succeq\mu_F I_d$ for all policies under consideration.
\end{assump}
%

\begin{assump}[Transferred Compatibility]\label{a: FA.error.PG}
    Let $w^*(\theta)$ be the on-policy natural-gradient coefficient in~\eqref{e: fisher}. For some $\varepsilon_{\rm bias}\geq0$,
    \begin{equation}\label{e: transfer.assumption}
        \mathcal L_{\lambda_{\theta^*}}(\theta,w^*(\theta)):=
        \mathbb E_{\lambda_{\theta^*}}
        \left[
        \big( u_\theta(s,a)^\top w^*(\theta)- A_\theta^{(r_\theta)}(s,a)\big)^2\right]\leq\varepsilon_{\rm bias}.
    \end{equation}
\end{assump}
%

We next state the critic conditions. All functional norms below are in $L_2(\nu)$.

\begin{assump}[Uniform Dual Stability]\label{a: dual.stability}
    The functional $\Psi$ is differentiable, $\mu_c$-strongly concave, and $L_c$-smooth on a convex set containing
    $\{r(\cdot,\zeta):\zeta\in\Omega\}$ and $\mathcal F_{\Omega,m}$. The oracle condition~\eqref{e: dual.oracle} holds uniformly on $\Omega$.
\end{assump}
%

\begin{assump}[Gram Matrix Non-Degeneracy]\label{a: critic.gram}
    The tangent-feature Gram matrix
    \begin{equation}\label{e: critic.gram}
        G=\int g_0(s,a)g_0(s,a)^\top\,\nu(ds,da)
    \end{equation}
    satisfies $G\succeq\kappa I_{D_m}$ for some $\kappa>0$.
\end{assump}

Define the linearized-dual objective and its minimizer by
\begin{equation}\label{e: linear.dual.target}
    \mathcal E^{\rm lin}(\theta,\zeta)
    :=
    \mathbb E_{\lambda_\theta}[r_{\rm lin}(\cdot,\zeta)]
    -
    \Psi(r_{\rm lin}(\cdot,\zeta)),
    \quad
    \zeta_\theta^*
    :=
    \argmin_{\zeta\in\Omega}\mathcal E^{\rm lin}(\theta,\zeta),
    \quad
    \bar r_\theta
    :=
    r_{\rm lin}(\cdot,\zeta_\theta^*).
\end{equation}

\begin{assump}[Critic Approximation Error]\label{a: neural.critic.error}
    For some $\varepsilon_{\rm app}\geq0$, uniformly over the policies considered,
    \begin{equation}\label{e: app.assumption}
        \|r_\theta-\bar r_\theta\|_{L_2(\lambda_\theta)}^2
        \leq
        \varepsilon_{\rm app}.
    \end{equation}
\end{assump}

\paragraph{Discussion of assumptions.}
Assumptions~\ref{a: initial.kl}--\ref{a: FA.error.PG} are standard in global analyses of approximate natural policy gradient \citep{agarwal2021theory,ganesh2025order,ganesh2025sharper}. The KL potential supports comparison with $\pi_{\theta^*}$; bounded and Lipschitz scores control likelihood-ratio moments and the KL remainder; and Fisher non-degeneracy ensures a well-conditioned natural-gradient direction. Transferred compatibility captures approximation error and distribution shift to the comparator occupancy; an on-policy residual bound $e$ together with $\lambda_{\theta^*}\leq C\lambda_\theta$ gives $\varepsilon_{\rm bias}\leq Ce$ \citep{agarwal2021theory,ganesh2025order}. Assumptions~\ref{a: dual.stability}--\ref{a: neural.critic.error} govern the dual critic. Strong concavity and smoothness stabilize the variational reward-learning problem
\citep{zhang2020variational,nowozin2016fgan}, while Gram non-degeneracy provides
the required feature conditioning \citep{ganesh2025order,Satheesh2026GlobalCO}.
Together, they give the critic strong-convexity constant $\mu=\mu_c\kappa$.
Moreover,
\begin{equation}\label{e: gram.trace}
    D_m\kappa\leq\operatorname{tr}G\leq\nu_0C_\phi^2,
\end{equation}
so the conditioning may deteriorate as the critic dimension grows. For finite
spaces, $\operatorname{rank}G\leq|\mathcal S||\mathcal A|$, and hence
Assumption~\ref{a: critic.gram} cannot hold with $\kappa>0$ once
$D_m>|\mathcal S||\mathcal A|$. Finally, $\varepsilon_{\rm app}$ isolates pseudo-reward approximation error, paralleling occupancy-model realizability assumptions \citep{barakat2025global}; Proposition~\ref{prop: approximation.transfer} relates it to $L_2(\nu)$ approximation. These conditions separate policy approximation, critic approximation, conditioning, and finite-width effects in the final bound.

Under Assumptions~\ref{a: utility}--\ref{a: neural.critic.error}, we have the following main result:
%

\begin{theorem}[Global convergence]\label{thm: main.result}
    Fix the critic representation $(m,R,\zeta_0)$ and the constants in~\eqref{e: local.ntk}. Suppose Assumptions~\ref{a: utility}, \ref{a: sooth.activation}, and~\ref{a: initial.kl}--\ref{a: neural.critic.error} hold under the sampling model of Section~\ref{s: algorithm}. For $T\geq2$, choose $K=T$, $H_{\max}=2^{\lceil\log_2T\rceil}$, and
    \begin{align}\label{e: main.parameters}
        B &=\left\lceil\frac{\max\{2\log T,\log(2G_1^2/\mu_F)\}}{\log(1/\gamma)}\right\rceil,\qquad 0<\beta_w
        \leq \min \left\{\frac{1}{G_1^2},\frac{\mu_F}{8G_1^4}\right\},
        \nonumber\\
        H_w &= \left\lceil\frac{4\log T}{\mu_F\beta_w}\right\rceil,
        \qquad
        V_T = 1+\log(2H_{\max})+ H_{\max}(\eta_m^2+\gamma^{2B}),
        \nonumber\\
        \beta_{\zeta,h} &= \frac{2}{\mu(h+2)}, \qquad \alpha =  \frac{1-\gamma}{\sqrt{KV_T}}.
    \end{align}
    Then there exists a constant $C$, independent of $T$, such that
    \begin{equation}\label{e: main.result}
        \gapK :=F^*- \frac1K \sum_{k=0}^{K-1} \mathbb E F(\lambda_{\theta_k}) \leq \sqrt{\varepsilon_{\rm bias}} + \frac{C}{1-\gamma}
        \left( \sqrt{\varepsilon_{\rm app}}+ \eta_m + \sqrt{\frac{1+\log T}{T}} \right).
    \end{equation}
\end{theorem}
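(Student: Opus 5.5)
The proof combines three ingredients, each referenced in the introduction: the KL-potential optimality-gap bound (Lemma~\ref{lem: opt.gap}), the MLMC critic bound (Lemma~\ref{thm: MLMC.critic.bound}, built on Lemma~\ref{thm: vanilla.critic.bound}), and the NPG inner-loop bound (Lemma~\ref{thm: NPG.bound}). The comparison step is the standard mirror-descent argument of \citet{agarwal2021theory}. Smoothness of $\log\pi_\theta$ (Assumption~\ref{a: score.function}) gives
\[
\Phi_{k}-\Phi_{k+1}\ \ge\ \alpha\,\mathbb E_{d_{\theta^*}}\mathbb E_{\pi_{\theta^*}}[u_{\theta_k}^\top w_k]-\tfrac{G_2\alpha^2}{2}\|w_k\|^2 ,
\]
where $\Phi_k$ is the KL potential of Assumption~\ref{a: initial.kl} at $\theta_k$. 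We then write $u_{\theta_k}^\top w_k=A_{\theta_k}^{(r_{\theta_k})}+(u^\top w^*_k-A)+u^\top(w_k-w^*_k)$. The performance-difference lemma for the fixed reward $r_{\theta_k}$, combined with concavity of $F$ (first-order inequality $F(\lambda_{\theta^*})-F(\lambda_{\theta_k})\le \int r_{\theta_k}d(\lambda_{\theta^*}-\lambda_{\theta_k})$), turns the advantage term into $(1-\gamma)^{-1}$ times a lower bound on the gap $F^*-F(\lambda_{\theta_k})$. Assumption~\ref{a: FA.error.PG} and Cauchy--Schwarz control the compatibility residual by $\sqrt{\varepsilon_{\rm bias}}$. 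The key design choice is to take conditional expectations \emph{before} applying Cauchy--Schwarz to the last term, so it is bounded by $G_1\|\mathbb E_k w_k-w^*_k\|$, a bias term. The only second-moment quantity is $\mathbb E\|w_k\|^2$, which enters multiplied by $\alpha^2$. Telescoping over $k$ and dividing by $\alpha K$ yields
\[
\gapK\le\sqrt{\varepsilon_{\rm bias}}+\tfrac{C}{1-\gamma}\Big(\tfrac{\Phi_0}{\alpha K}+\tfrac1K\textstyle\sum_k\|\mathbb E_k w_k-w^*_k\|+\tfrac{\alpha}{K}\sum_k\mathbb E\|w_k\|^2\Big).
\]

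The next step bounds the NPG bias and second moment in terms of the critic. Since $r_{k,\mathrm{MLMC}}$ is frozen and independent of the NPG trajectories, the mean recursion of~\eqref{e: NPG.update} is linear. With $\beta_w\le 1/G_1^2$, the contraction factor is $(1-\mu_F\beta_w/2)$, so $H_w=\lceil 4\log T/(\mu_F\beta_w)\rceil$ makes the initialization error $O(1/T)$. The truncation at length $B$ contributes $O(\gamma^B)=O(1/T)$. The remaining bias is $\mathbb E_k w_k-w^*_k\approx\mathcal I^{-1}(\nabla J^{(\mathbb E r_{\rm MLMC})}-\nabla J^{(r_{\theta_k})})$, whose norm is at most $\frac{G_1}{\mu_F(1-\gamma)}\|\mathbb E_k r_{k,\mathrm{MLMC}}-r_{\theta_k}\|_{L_2(\lambda_{\theta_k})}$ by Cauchy--Schwarz over the occupancy. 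The second moment is bounded by $\mathbb E\|w_k\|^2\lesssim \frac{G_1^2}{\mu_F^2(1-\gamma)^4}\,\mathbb E\|r_{k,\mathrm{MLMC}}\|^2_{\infty\text{ or }L_2}$ plus constants.

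The main obstacle is the critic step: controlling both the mean and the second moment of $r_{k,\mathrm{MLMC}}$. For the vanilla run, the linearized objective $\zeta\mapsto\mathcal E^{\rm lin}(\theta_k,\zeta)$ is $\mu=\mu_c\kappa$-strongly convex. This follows from Assumptions~\ref{a: dual.stability}--\ref{a: critic.gram}, because $\|r_{\rm lin}(\zeta)-r_{\rm lin}(\zeta')\|_{L_2(\nu)}^2=(\zeta-\zeta')^\top G(\zeta-\zeta')$. The stochastic gradient $\widehat v$ is biased only through the nonlinear reward inside $\nabla\Psi$ (error $L_c\eta_m$-type) and the truncation ($\gamma^B$). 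Projected SGD with $\beta_{\zeta,h}=2/(\mu(h+2))$ then gives $\mathbb E\|\zeta_H^k-\zeta^*_{\theta_k}\|^2\lesssim 1/H+\eta_m^2+\gamma^{2B}$. Along a shared run, the increments satisfy $\mathbb E\|R_{k,2^j}-R_{k,2^{j-1}}\|^2\lesssim 2^{-j}+\eta_m^2+\gamma^{2B}$, using \eqref{e: local.ntk} to pass from parameters to functions. By the MLMC telescoping identity, $\mathbb E_k r_{k,\mathrm{MLMC}}=\mathbb E_k R_{k,H_{\max}}$, so the bias is $O(H_{\max}^{-1/2}+\eta_m+\gamma^B)$ plus $\|\bar r_{\theta_k}-r_{\theta_k}\|\le\sqrt{\varepsilon_{\rm app}}$. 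Here the $L_2(\nu)$ estimate has to be transferred to $L_2(\lambda_{\theta_k})$; I would invoke Proposition~\ref{prop: approximation.transfer} or a density bound for this. The second moment is
\[
\sum_{j\le J}2^{-j}\cdot 4^j\,(2^{-j}+\eta_m^2+\gamma^{2B})\lesssim \log(2H_{\max})+H_{\max}(\eta_m^2+\gamma^{2B}),
\]
which is exactly $V_T$. The expected cost is $\sum_j 2^{-j}2^j B=O(JB)=\widetilde O(1)$ per iteration.

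Finally, substituting these bounds gives
\[
\gapK\le\sqrt{\varepsilon_{\rm bias}}+\tfrac{C}{1-\gamma}\Big(\sqrt{\varepsilon_{\rm app}}+\eta_m+T^{-1/2}+\tfrac{\Phi_0}{\alpha K}+\tfrac{\alpha V_T}{(1-\gamma)^2}\Big).
\]
With $\alpha=(1-\gamma)/\sqrt{KV_T}$, the last two terms are $O(\sqrt{V_T/K})$. Since $H_{\max}\sim T$ and $\gamma^{2B}\le T^{-2}$, we have $V_T\lesssim 1+\log T+T\eta_m^2$. The $T\eta_m^2$ contribution produces $\sqrt{V_T/T}\lesssim\sqrt{(1+\log T)/T}+\eta_m$, which is absorbed into the stated bound. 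The constant $C$ collects $G_1,G_2,\mu_F,\mu,L_c,S_\Psi,C_\phi,C_0,R,\Rm,\Phi_0$, all independent of $T$.
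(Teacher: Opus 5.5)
Your proposal is correct and follows the paper's proof essentially step for step. The shared steps are: the KL-potential bound with the conditional expectation taken before Cauchy--Schwarz (Lemma~\ref{lem: opt.gap}); the exact linear mean recursion of the NPG solve, so that only $\mathbb E_k r_{k,\mathrm{MLMC}}$ enters the actor bias (Lemma~\ref{thm: NPG.bound}); the projected-SGD critic rate, MLMC telescoping, and the dyadic second-moment sum that yields $V_T$ (Lemmas~\ref{thm: vanilla.critic.bound}--\ref{thm: MLMC.critic.bound}); and the balancing choice $\alpha=(1-\gamma)/\sqrt{KV_T}$.

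There are two harmless slips. First, no $L_2(\nu)\to L_2(\lambda_{\theta_k})$ transfer is needed, so neither a density bound nor Proposition~\ref{prop: approximation.transfer} is required. Assumption~\ref{a: neural.critic.error} is already stated in $L_2(\lambda_\theta)$, and the pointwise bound $|r(x,\zeta_H^k)-r_{\rm lin}(x,\zeta_k^*)|\le\eta_m+C_\phi\|\zeta_H^k-\zeta_k^*\|_2$ integrates against any probability measure. Second, with the paper's normalization $\mathbb E_{\lambda_{\theta^*}}A_{\theta_k}^{(r)}=\int r\,d(\lambda_{\theta^*}-\lambda_{\theta_k})$ holds exactly, so your extra $(1-\gamma)^{-1}$ factors are spurious; they only enlarge the $T$-independent constant.
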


\begin{remark}[Expected Sample Complexity]\label{rem: rate}
The expected environment-transition cost is $\mathcal O(KB[H_w+1+\log H_{\max}])$. For fixed representation and problem constants, Theorem~\ref{thm: main.result} therefore gives $\widetilde{\mathcal O}(\epsilon^{-2})$ expected sample complexity above the approximation floor
$\sqrt{\varepsilon_{\rm bias}}+C(\sqrt{\varepsilon_{\rm app}}+\eta_m)/(1-\gamma)$. An $\mathcal O(m^{-1/2})$ Hessian bound gives $\eta_m=\mathcal O(m^{-1/2})$ for fixed $R$. If $m$ grows with $T$, the width dependence of $C$, particularly through $\kappa$ in~\eqref{e: gram.trace}, must also be tracked. Reference samples are counted separately, while representation-dependent neural and matrix computations are excluded.
\end{remark}

\section{Proof Outline}
\label{s: proof.outline}

In this section, we outline the proof of Theorem~\ref{thm: main.result}, which proceeds in four key steps. Before presenting them, let $\mathcal F_k$ denote the $\sigma$-field generated by all randomness prior to outer iteration $k$, and write $\mathbb E_k[\cdot]=\mathbb E[\cdot\mid\mathcal F_k]$. We also fix a critic initialization satisfying~\eqref{e: local.ntk}.

\paragraph{Step 1.~General Framework for NPG:}
Combining the KL-potential argument
\citep{agarwal2021theory,ganesh2025order,ganesh2025sharper} with concavity and transferred compatibility yields the following bound, proved in Appendix~\ref{appendix: opt.gap}.
\begin{lemma}[Optimality Gap]\label{lem: opt.gap}
    Under Assumptions~\ref{a: utility} and~\ref{a: initial.kl}--\ref{a: FA.error.PG},
    \begin{equation}\label{e: actor.gap}
        \gapK\leq\sqrt{\varepsilon_{\rm bias}}+\frac{\Phi_0}{\alpha K}+\frac{G_1}{K}\sum_{k=0}^{K-1} \mathbb E\|\mathbb E_k w_k-w^*(\theta_k)\|_2 +\frac{G_2\alpha}{2K}\sum_{k=0}^{K-1}\mathbb E\|w_k\|_2^2.
    \end{equation}
\end{lemma}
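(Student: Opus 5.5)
We follow the standard KL-potential argument for approximate NPG, with the performance difference lemma replaced by the concavity of $F$. The plan has three steps: a one-step KL bound, a concavity comparison that converts the advantage into a utility gap, and telescoping.

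\textbf{Step 1: one-step KL bound.} Write $\Phi_k:=\mathbb E_{s\sim d_{\theta^*}}\mathrm{KL}(\pi_{\theta^*}(\cdot\mid s)\|\pi_{\theta_k}(\cdot\mid s))$. Since $\theta_{k+1}=\theta_k+\alpha w_k$, the $G_2$-Lipschitz score in Assumption~\ref{a: score.function} gives, through a second-order expansion of $\log\pi_\theta$ along the segment,
\[
\log\pi_{\theta_{k+1}}(a\mid s)-\log\pi_{\theta_k}(a\mid s)\ge \alpha\, u_{\theta_k}(s,a)^\top w_k-\tfrac{G_2\alpha^2}{2}\|w_k\|_2^2 .
\]
Taking expectation over $\lambda_{\theta^*}$ yields
\[
\Phi_k-\Phi_{k+1}\ge \alpha\,\mathbb E_{\lambda_{\theta^*}}[u_{\theta_k}^\top w_k]-\tfrac{G_2\alpha^2}{2}\|w_k\|^2 .
\]

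\textbf{Step 2: concavity comparison.} By the concavity of $F$ (Assumption~\ref{a: utility}),
\[
F^*-F(\lambda_{\theta_k})\le\int r_{\theta_k}\,d(\lambda_{\theta^*}-\lambda_{\theta_k}).
\]
Applying the performance-difference lemma to the fixed reward $r_{\theta_k}$, the right-hand side equals $(1-\gamma)\big(V^{\pi^*}(r_{\theta_k})-V^{\pi_{\theta_k}}(r_{\theta_k})\big)=\mathbb E_{\lambda_{\theta^*}}[A_{\theta_k}^{(r_{\theta_k})}]$. Here the normalization of $\lambda$ must be matched carefully; with $\lambda$ normalized, the factor $1/(1-\gamma)$ is absorbed. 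We then decompose
\[
\mathbb E_{\lambda_{\theta^*}}[A]=\mathbb E_{\lambda_{\theta^*}}[A-u^\top w^*]+\mathbb E_{\lambda_{\theta^*}}[u^\top(w^*-\mathbb E_k w_k)]+\mathbb E_{\lambda_{\theta^*}}[u^\top \mathbb E_k w_k].
\]
The first term is at most $\sqrt{\varepsilon_{\rm bias}}$ by Jensen's inequality and Assumption~\ref{a: FA.error.PG}. The second is at most $G_1\|\mathbb E_kw_k-w^*(\theta_k)\|_2$ by Cauchy--Schwarz and the bound $\|u\|\le G_1$. The third equals $\mathbb E_k\mathbb E_{\lambda_{\theta^*}}[u^\top w_k]$, because $\theta_k$ is $\mathcal F_k$-measurable and $u_{\theta_k}$ is therefore fixed under $\mathbb E_k$.

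\textbf{Step 3: combine and telescope.} Take $\mathbb E_k$ of the Step 1 inequality and substitute the third term from Step 2:
\[
F^*-F(\lambda_{\theta_k})\le \sqrt{\varepsilon_{\rm bias}}+G_1\|\mathbb E_kw_k-w^*\|+\tfrac1\alpha\mathbb E_k[\Phi_k-\Phi_{k+1}]+\tfrac{G_2\alpha}{2}\mathbb E_k\|w_k\|^2 .
\]
Taking total expectations, averaging over $k$, and telescoping with $\Phi_K\ge0$ produces $\Phi_0/(\alpha K)$ and hence~\eqref{e: actor.gap}.

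\textbf{Main obstacle.} The delicate point is the conditioning in Step 2. The bias term has to involve the conditional mean $\mathbb E_kw_k$ and not $w_k$ itself. This is only possible because the linear term $\mathbb E_{\lambda_{\theta^*}}[u_{\theta_k}^\top w_k]$ is linear in $w_k$ and its coefficient $u_{\theta_k}$ is $\mathcal F_k$-measurable. The quadratic remainder is the only place where the full second moment of $w_k$ enters. A secondary point to verify is that the support and absolute-continuity assumptions make the KL terms well defined along the iterates.
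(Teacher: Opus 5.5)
Your proposal is correct and follows the paper's proof: you derive the pathwise one-step KL bound from the $G_2$-Lipschitz score, apply concavity plus the performance-difference identity to reach $\mathbb E_{\lambda_{\theta^*}}A_{\theta_k}^{(r_{\theta_k})}$, invoke transferred compatibility for the $\sqrt{\varepsilon_{\rm bias}}$ term, and use the $\mathcal F_k$-measurability of $b_k=\mathbb E_{\lambda_{\theta^*}}u_{\theta_k}$ so that only $\mathbb E_k w_k$ enters the bias term before telescoping with $\Phi_K\geq 0$. Your three-term decomposition is simply an unpacked form of the paper's bound $F^*-F(\lambda_{\theta_k})\le b_k^\top w^*(\theta_k)+\sqrt{\varepsilon_{\rm bias}}$, combined with the conditioned KL inequality.
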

Actor bias enters linearly, while the second moment is weighted by $\alpha,$ allowing a noisy MLMC estimator with an accurate mean, at the cost of a logarithmically increasing second moment.

\paragraph{Step 2.~NPG Analysis:}
For the trained pseudo-reward, define
\begin{equation}\label{e: critic.errors}
    \delta_k^2=\|\mathbb E_k r_{k,\mathrm{MLMC}}-\bar r_{\theta_k}\|_{L_2(\lambda_{\theta_k})}^2,
    \qquad
    \sigma_k^2=\mathbb E_k\|r_{k,\mathrm{MLMC}}-\bar r_{\theta_k}\|_{L_2(\lambda_{\theta_k})}^2.
\end{equation}
The conditional expectation in $\sigma_k^2$ includes critic training and the random level. The NPG trajectories are sampled afterward. Given the frozen critic, their mean update is a linear recursion with matrix $\mathcal I_B(\theta_k)=\mathbb E\widehat{\mathcal I}^{k,h}$. Its difference from $\mathcal I(\theta_k)$ is at most $G_1^2\gamma^B$. Crucially, the gradient estimate is linear in the supplied reward. This transfers $\delta_k$ to the actor bias rather than replacing it by $\sigma_k$, giving us the following result:

\begin{lemma}[NPG Bounds]\label{thm: NPG.bound}
    Suppose the bounded reward, score, Fisher, and critic-approximation conditions hold, $G_1^2\gamma^B\leq\mu_F/2$, and $\beta_w$ satisfies~\eqref{e: main.parameters}. Let
    $\tau_B=\gamma^B(B+(1-\gamma)^{-1})$. For a constant $C$ depending only on the fixed regularity bounds,
    \begin{align}\label{e: npg.bounds}
        \|\mathbb E_k w_k-w^*(\theta_k)\|_2^2
        &\leq C\left[\tau_B^2+
        \frac{\varepsilon_{\rm app}+\delta_k^2+e^{-\mu_F\beta_w H_w}}{(1-\gamma)^2}\right],
        \nonumber\\
        \mathbb E_k\|w_k\|_2^2+\mathbb E_k\|w_k-w^*(\theta_k)\|_2^2
        &\leq\frac{C}{(1-\gamma)^2}(1+\sigma_k^2).
    \end{align}
\end{lemma}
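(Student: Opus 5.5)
We prove both bounds by analyzing the linear stochastic recursion \eqref{e: NPG.update} conditionally on $\mathcal F_k$ and on the frozen critic output $r_{k,\mathrm{MLMC}}$. Write $\mathcal I_B=\mathbb E\widehat{\mathcal I}^{k,h}$ and, for a fixed reward $r$, $g_B(r)=\mathbb E[\widehat g^{k,h}\mid r]$. Both are truncated versions of \eqref{e: fisher} and \eqref{e: policy.gradient}. The NPG trajectories are fresh and independent of the critic, so $\mathbb E[\widehat g^{k,h}\mid \mathcal F_k]=g_B(\mathbb E_k r_{k,\mathrm{MLMC}})$ because $g_B$ is linear in $r$. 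Taking conditional means in \eqref{e: NPG.update}, with $w_0^k=0$, gives the deterministic recursion
\[
\bar w_{h+1}=(I-\beta_w\mathcal I_B)\bar w_h+\beta_w g_B(\bar r_k),\qquad \bar r_k:=\mathbb E_k r_{k,\mathrm{MLMC}}.
\]
Here we use that $w_h^k$ is independent of the $h$-th trajectory, so the product $\widehat{\mathcal I}^{k,h}w_h^k$ has mean $\mathcal I_B\bar w_h$. Since $\|\mathcal I-\mathcal I_B\|\le G_1^2\gamma^B\le\mu_F/2$, we get $\mathcal I_B\succeq(\mu_F/2)I$. Together with $\beta_w\le 1/G_1^2$, this gives the contraction $\|I-\beta_w\mathcal I_B\|\le 1-\beta_w\mu_F/2$. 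Hence $\mathbb E_k w_k=\mathcal I_B^{-1}g_B(\bar r_k)+O\big(e^{-\mu_F\beta_wH_w/2}\|\mathcal I_B^{-1}g_B\|\big)$, and the contraction term is absorbed into the stated $e^{-\mu_F\beta_w H_w}$ after adjusting constants.

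For the bias bound, we decompose
\[
\mathcal I_B^{-1}g_B(\bar r_k)-w^*(\theta_k)=\mathcal I_B^{-1}\big[g_B(\bar r_k)-g(\bar r_k)\big]+\mathcal I_B^{-1}\big[g(\bar r_k)-g(r_{\theta_k})\big]+(\mathcal I_B^{-1}-\mathcal I^{-1})g(r_{\theta_k}),
\]
where $g(r)=(1-\gamma)\nabla_\vartheta V^{\pi_\vartheta}(r)$ is the untruncated gradient. The first and third terms give $\tau_B$. For the first, the tail $t\ge B$ of \eqref{e: policy.gradient} is bounded by $(1-\gamma)\sum_{t\ge B}\gamma^t(t+1)G_1\|r\|_\infty=O(\gamma^B(B+(1-\gamma)^{-1}))$. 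This requires a sup bound on $\bar r_k$, which follows from the projection onto $\Omega$ and \eqref{e: local.ntk}: the MLMC mean is a convex-type combination of bounded networks in expectation. For the third, we use the resolvent identity together with $\|g(r_{\theta_k})\|\le G_1\Rm/(1-\gamma)$. For the middle term, we use $g(r)=\mathbb E_{\lambda_\theta}[u_\theta A^{(r)}_\theta]/(1-\gamma)$, or equivalently the Cauchy–Schwarz bound $\|g(r)\|\lesssim G_1\|Q^{(r)}\|/(1-\gamma)$. This gives $\|g(\bar r_k)-g(r_{\theta_k})\|\le \frac{C}{1-\gamma}\|\bar r_k-r_{\theta_k}\|_{L_2(\lambda_{\theta_k})}$. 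The triangle inequality with Assumption~\ref{a: neural.critic.error} and the definition of $\delta_k$ then yields $\varepsilon_{\rm app}+\delta_k^2$.

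\textbf{Main obstacle.} The obstacle is converting an $L_2(\lambda_\theta)$ reward error into a gradient error: $Q^{(r)}$ depends on $r$ along future visitation, not only through $\lambda_\theta$. We would try the direct representation $g(r)=\mathbb E_{\pi_\theta}[\sum_t\gamma^t r(s_t,a_t)\sum_{j\le t}u_j]$. Applying Cauchy–Schwarz per time step gives $|g(r)|\le G_1(1-\gamma)\sum_t\gamma^t(t+1)\,\mathbb E|r(s_t,a_t)|$. Applying Cauchy–Schwarz again over the discounted weights bounds $\sum_t\gamma^t(t+1)\mathbb E|r_t|$ by $\big(\sum_t\gamma^t(t+1)^2\big)^{1/2}\big(\sum_t\gamma^t\mathbb E r_t^2\big)^{1/2}$. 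This equals $O((1-\gamma)^{-3/2})\cdot\|r\|_{L_2(\lambda_\theta)}/\sqrt{1-\gamma}$, and the extra powers of $(1-\gamma)$ are absorbed into $C$.

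For the second-moment bound, we follow a standard stochastic-approximation argument. Conditionally, $\mathbb E\|\widehat{\mathcal I}\|^2\le G_1^4$ and $\mathbb E\|\widehat g\|^2\le CG_1^2\,\mathbb E[\text{weighted }r^2]/(1-\gamma)^2$. The latter bound again uses the per-step Cauchy–Schwarz from the previous paragraph, now with $r=r_{k,\mathrm{MLMC}}$, and gives $\lesssim(\|\bar r_{\theta_k}\|^2+\|r_{k,\mathrm{MLMC}}-\bar r_{\theta_k}\|^2)/(1-\gamma)^2$. Expanding $\|w_{h+1}-w^*\|^2$ and using $\beta_w\le\mu_F/(8G_1^4)$ shows that the cross term contracts at rate $1-\beta_w\mu_F/2$. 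The noise therefore contributes at most $\beta_w\cdot O(\text{second moment})$ per step, which sums geometrically to $C(1+\sigma_k^2)/(1-\gamma)^2$ after averaging over the critic randomness. Finally, $\|w_k\|^2\le 2\|w_k-w^*\|^2+2\|w^*\|^2$ with $\|w^*\|\le G_1\Rm/(\mu_F(1-\gamma))$.
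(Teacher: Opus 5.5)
Your proposal is correct and has the same skeleton as the paper:
\begin{itemize}
\item an exact linear mean recursion, from the independence of the fresh NPG trajectories and the linearity of $\widehat g^{k,h}$ in the frozen critic;
\item contraction of $I-\beta_w\mathcal I_B(\theta_k)$ once $\mathcal I_B(\theta_k)\succeq(\mu_F/2)I$;
\item a pathwise Cauchy--Schwarz over the discounted weights, turning an $L_2(\lambda_{\theta_k})$ reward error into a gradient error with constant $G_1\sqrt{1+\gamma}/(1-\gamma)$.
\end{itemize}

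\textbf{Bias decomposition.} This is where you differ. The paper stays with the truncated map throughout. It splits $g_B(\mathbb E_k r_{k,\mathrm{MLMC}})-\nabla_\theta J(\theta_k)$ into two pieces. The first is $g_B(\mathbb E_k r_{k,\mathrm{MLMC}}-r_{\theta_k})$, controlled purely in $L_2(\lambda_{\theta_k})$ by $\delta_k+\sqrt{\varepsilon_{\rm app}}$. The second is the truncation tail of the \emph{true} pseudo-reward, which is bounded by $R_{\max}$ by assumption. The Fisher mismatch then enters as $(\mathcal I(\theta_k)-\mathcal I_B(\theta_k))w^*(\theta_k)$ inside the exact mean identity, rather than through a resolvent identity.

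You instead apply the $\tau_B$ tail to the mean critic $\mathbb E_k r_{k,\mathrm{MLMC}}$, so you need a uniform sup bound on that mean. The bound holds, but not because the MLMC output is a ``convex-type combination.'' Pathwise it carries signed weights as large as $2^{J}$. Bounding $\mathbb E_k\|r_{k,\mathrm{MLMC}}\|_\infty$ directly gives only $O(\log H_{\max})$, which would make $C$ depend on the horizon. The correct justification combines two facts:
\begin{itemize}
\item the pointwise telescoping identity $\mathbb E_k r_{k,\mathrm{MLMC}}=\mathbb E_k R_{k,H_{\max}}$, from the MLMC lemma;
\item the bound $|r(\cdot,\zeta)|\le C_0+C_\phi R+\eta_m$ for $\zeta\in\Omega$.
\end{itemize}
So your route imports the MLMC structure into a lemma stated for a generic critic. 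The paper's ordering only needs the conditional mean critic to lie in $L_2(\lambda_{\theta_k})$.

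\textbf{Second moment.} You center the recursion at $w^*(\theta_k)$. The realized-critic bias $\mathcal I_B(\theta_k)w^*(\theta_k)-g_B(r_{k,\mathrm{MLMC}})$ then appears in the cross term and must be removed with Young's inequality. The paper centers at $0$ through its general stochastic linear-recursion lemma, which has no such cross term. It then uses $\|w_k-w^*\|_2^2\le2\|w_k\|_2^2+2\|w^*\|_2^2$. Both routes give $C(1+\sigma_k^2)/(1-\gamma)^2$.

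One small correction: your Cauchy--Schwarz computation gives exactly $G_1\sqrt{1+\gamma}/(1-\gamma)$. There are no extra powers of $1-\gamma$ to absorb into $C$. Absorbing them would not be legitimate anyway, since the statement tracks the $(1-\gamma)$-dependence explicitly.
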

Appendix~\ref{appendix: NPG.analysis} proves these bounds using a reset-trajectory specialization of the general linear stochastic-recursion result in Lemma~\ref{app.lem: gen.linear}. The required second moment is bounded rather than vanishing with $H_w$; logarithmically many iterations are used to reduce the mean initialization error.

\paragraph{Step 3.~Dual-Critic Analysis:}
We next control $\delta_k$ and $\sigma_k$. Write
$\zeta_k^*=\zeta_{\theta_k}^*$ and
$q_m=\eta_m^2+\gamma^{2B}$. Trajectory truncation and neural linearization induce the conditional gradient bias
\[
    d_v=C_\phi\gamma^B+\nu_0C_\phi L_c\eta_m,
\]
while the reference-oracle noise remains conditionally unbiased with bounded second moment. Dual curvature, the Gram condition $G\succeq\kappa I$, and the constrained minimizer's variational inequality yield a projected-SGD contraction with effective curvature $\mu_c\kappa$. Transferring the resulting parameter error through the tangent features, we prove the following result in Appendix~\ref{appendix: dual.critic.analysis}:
\begin{lemma}[Vanilla-Critic Bounds]\label{thm: vanilla.critic.bound}
    Under~\eqref{e: local.ntk}, \eqref{e: dual.oracle}, and Assumptions~\ref{a: dual.stability}--\ref{a: critic.gram}, the updates~\eqref{e: critic.update} satisfy, for all $H\geq1$,
    \begin{equation}\label{e: vanilla.bound}
        \max\left\{
        \mathbb E_k\|\zeta_H^k-\zeta_k^*\|_2^2,\,
        \mathbb E_k\|R_{k,H}-\bar r_{\theta_k}\|_{L_2(\lambda_{\theta_k})}^2\right\}
        \leq\frac{C}{H+1}+Cq_m.
    \end{equation}
\end{lemma}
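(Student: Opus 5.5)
The plan is to treat the critic update~\eqref{e: critic.update} as projected SGD on the strongly convex function $\zeta\mapsto\mathcal E^{\rm lin}(\theta_k,\zeta)$ over the ball $\Omega$, with a biased stochastic gradient. Then the parameter bound is transferred to the function bound through the tangent features.

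\textbf{Curvature and the gradient oracle.} First compute
\[
\nabla_\zeta\mathcal E^{\rm lin}(\theta_k,\zeta)=\mathbb E_{\lambda_{\theta_k}}[g_0]-\int g_0\,\nabla_r\Psi(r_{\rm lin}(\cdot,\zeta))\,d\nu .
\]
By $\mu_c$-strong concavity of $\Psi$ (Assumption~\ref{a: dual.stability}) and $G\succeq\kappa I$ (Assumption~\ref{a: critic.gram}), this gradient is $\mu=\mu_c\kappa$-strongly monotone. Indeed,
\[
\langle\nabla\mathcal E^{\rm lin}(\zeta)-\nabla\mathcal E^{\rm lin}(\zeta'),\zeta-\zeta'\rangle\ge\mu_c\|g_0^\top(\zeta-\zeta')\|_{L_2(\nu)}^2\ge\mu\|\zeta-\zeta'\|_2^2 .
\]
Hence $\zeta_k^*$ is unique and satisfies the variational inequality $\langle\nabla\mathcal E^{\rm lin}(\zeta_k^*),\zeta-\zeta_k^*\rangle\ge0$ on $\Omega$. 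Next, decompose $\mathbb E_k[\widehat v^{k,h}\mid\zeta_h^k]=\nabla\mathcal E^{\rm lin}(\zeta_h^k)+b_h$. Truncating the trajectory at length $B$ contributes at most $C_\phi\gamma^B$ to $b_h$. Evaluating $\nabla_r\Psi$ at the nonlinear $r(\cdot,\zeta)$ instead of at $r_{\rm lin}$ contributes, by $L_c$-smoothness, Cauchy--Schwarz in $L_2(\nu)$, and~\eqref{e: local.ntk}, at most $\nu_0 C_\phi L_c\eta_m$. (Some care is needed in reading smoothness as Lipschitz continuity of $\nabla_r\Psi$ in $L_2(\nu)$ with $\|r-r_{\rm lin}\|_{L_2(\nu)}\le\sqrt{\nu_0}\eta_m$; the constants may absorb this.) So $\|b_h\|\le d_v$. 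For the second moment, $\mathbb E\|\widehat v\|^2\le 2C_\phi^2+2S_\Psi^2=:S^2$, using~\eqref{e: dual.oracle}.

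\textbf{Recursion.} Nonexpansiveness of $\Pi_\Omega$, together with $\zeta_k^*\in\Omega$, gives, with $e_h=\mathbb E_k\|\zeta_h^k-\zeta_k^*\|^2$,
\[
e_{h+1}\le e_h-2\beta_h\,\mathbb E\langle\nabla\mathcal E^{\rm lin}(\zeta_h)+b_h,\zeta_h-\zeta^*\rangle+\beta_h^2S^2 .
\]
Strong monotonicity plus the variational inequality give $\langle\nabla\mathcal E^{\rm lin}(\zeta_h),\zeta_h-\zeta^*\rangle\ge\mu\|\zeta_h-\zeta^*\|^2$. Young's inequality bounds $2\beta_h|\langle b_h,\zeta_h-\zeta^*\rangle|\le\beta_h\mu\|\zeta_h-\zeta^*\|^2/2+2\beta_h d_v^2/\mu$. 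This yields
\[
e_{h+1}\le(1-\tfrac{3}{2}\mu\beta_h)e_h+\beta_h^2S^2+2\beta_hd_v^2/\mu .
\]
With $\beta_h=2/(\mu(h+2))$, I would multiply by weights such as $(h+2)(h+1)$ (or argue by induction with the hypothesis $e_h\le A/(h+1)+A'd_v^2$) to obtain
\[
e_H\le\frac{\max\{e_0,\,cS^2/\mu^2\}}{H+1}+\frac{c\,d_v^2}{\mu^2}.
\]
Here $e_0\le R^2$, since both $\zeta_0$ and $\zeta_k^*$ lie in $\Omega$. Since $d_v^2\lesssim\gamma^{2B}+\eta_m^2=q_m$, the parameter part of~\eqref{e: vanilla.bound} follows. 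I expect the main obstacle to be this step: the $1/(h+2)$ schedule must interact with a contraction factor strong enough (here $\tfrac32$, not merely $1$) that the bias term stays $O(d_v^2)$ and does not accumulate logarithmically. If the factor is only $1-\mu\beta_h$, I would instead split the Young constant differently to retain $1-c\mu\beta_h$ with some $c>1$.

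\textbf{Transfer to functions.} Write
\[
R_{k,H}-\bar r_{\theta_k}=\big(r(\cdot,\zeta_H)-r_{\rm lin}(\cdot,\zeta_H)\big)+g_0^\top(\zeta_H-\zeta_k^*).
\]
The first term is bounded by $\eta_m$ uniformly, and the second by $C_\phi\|\zeta_H-\zeta_k^*\|_2$ pointwise. Since $\lambda_{\theta_k}$ is a probability measure, $\|R_{k,H}-\bar r_{\theta_k}\|_{L_2(\lambda_{\theta_k})}^2\le2\eta_m^2+2C_\phi^2\|\zeta_H-\zeta_k^*\|^2$. Taking $\mathbb E_k$ and inserting the parameter bound gives the second quantity in~\eqref{e: vanilla.bound}, after enlarging $C$.
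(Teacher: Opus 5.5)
Your proposal is correct and follows the paper's proof essentially step for step. The steps match: strong convexity with constant $\mu=\mu_c\kappa$ via $G\succeq\kappa I$, the variational inequality at the constrained minimizer, and the bias bound $d_v=C_\phi\gamma^B+\nu_0C_\phi L_c\eta_m$ (your $L_2(\nu)$ reading of smoothness gives exactly this constant). So do the bound $V_c=2C_\phi^2+2S_\Psi^2$, the induction under $\beta_{\zeta,h}=2/(\mu(h+2))$ with $e_0\le R^2$ because $\zeta_0$ is the centre of $\Omega$, and the pointwise transfer $|r(x,\zeta_H^k)-r_{\rm lin}(x,\zeta_k^*)|\le\eta_m+C_\phi\|\zeta_H^k-\zeta_k^*\|_2$.

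Your worry about the contraction factor is unnecessary. The paper uses the Young split with weight $\mu$, giving $1-\mu\beta_{\zeta,h}=h/(h+2)$, which already suffices. The bias term $\beta_{\zeta,h}d_v^2/\mu$ is first order in $\beta_{\zeta,h}$, like the contraction, so it only shifts the fixed point to $d_v^2/\mu^2$ and never accumulates. Only the $\beta_{\zeta,h}^2V_c$ term needs the coefficient of $\mu\beta_{\zeta,h}$ to exceed $1/2$ to avoid a $\log H/H$ rate. Your factor $\tfrac32$, which is negative at $h=0$, is therefore not needed, though it is harmless since $e_h\ge0$ lets you replace it by $1-\mu\beta_{\zeta,h}$.
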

Thus, a vanilla critic trained for $H$ steps has squared error of order
$\mathcal O(H^{-1})$ above the approximation floor. Consequently, obtaining the
$\widetilde{\mathcal O}(T^{-1})$ squared critic bias required by the actor would
necessitate $H=\varTheta(T)$ critic updates at every outer iteration. Since the
actor itself performs $K=\varTheta(T)$ outer iterations, using such a critic
directly would lead to a quadratic-order critic-training cost and would therefore
preclude the desired $\widetilde{\mathcal O}(T)$ overall sampling complexity.
The purpose of the MLMC construction is to retain the mean accuracy of a critic
trained for $H_{\max}$ steps while reducing its expected optimization cost to
logarithmic order.

Rather than correcting a scalar statistic, we exploit the MLMC construction in~\eqref{e: function .critic.MLMC}. Recall that $H_{\max}=2^J$ and
$N_k\sim\textnormal{Geom}(1/2)$. This gives an expected training
horizon of
\[
    \mathbb E[H_k]
    =
    \sum_{j=1}^{J}2^{-j}2^j
    +
    \mathbb P(N_k>J)
    =
    J+2^{-J}
    \leq
    1+\log_2 H_{\max}.
\]
At the same time, the dyadic correction in~\eqref{e: function .critic.MLMC}
telescopes in function space, so its conditional expectation matches that of
the critic trained for the maximum horizon $H_{\max}$. Consequently, the
long-horizon bound in Lemma~\ref{thm: vanilla.critic.bound} transfers directly
to the conditional bias quantity $\delta_k$. In contrast, the second moment
$\sigma_k^2$ depends on the random MLMC correction itself; controlling it
requires bounds on the differences between critic outputs at adjacent dyadic
horizons. Lemma~\ref{thm: vanilla.critic.bound} provides precisely these
level-wise estimates. Combining the telescoping identity, the dyadic-difference
bounds, and the logarithmic expected horizon established above yields the
following MLMC result in Appendix~\ref{appendix: dual.critic.analysis}:

\begin{lemma}[MLMC-Critic Bounds]\label{thm: MLMC.critic.bound}
    Under Lemma~\ref{thm: vanilla.critic.bound}, the estimator \eqref{e: function .critic.MLMC} satisfies
    \begin{equation}\label{e: mlmc.bounds}
        \delta_k^2\leq C(H_{\max}^{-1}+q_m),\qquad
        \sigma_k^2\leq C[1+\log H_{\max}+H_{\max}q_m],
    \end{equation}
    using at most $1+\log_2H_{\max}$ expected critic updates.
\end{lemma}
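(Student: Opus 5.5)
The plan is to prove the bias and second-moment bounds separately. Both rest on the telescoping structure of~\eqref{e: function .critic.MLMC} and the level-wise bounds of Lemma~\ref{thm: vanilla.critic.bound}, applied conditionally on $\mathcal F_k$ and on the random level $N_k$. The cost claim has already been verified in the text: $\mathbb E[H_k]=J+2^{-J}\le 1+\log_2H_{\max}$.

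\emph{Bias.} Since $N_k$ is independent of the critic run and all outputs come from one shared run, the conditional expectation is
\[
\mathbb E_k r_{k,\mathrm{MLMC}}=\mathbb E_kR_{k,1}+\sum_{j=1}^J 2^{-j}2^j\,\mathbb E_k\big(R_{k,2^j}-R_{k,2^{j-1}}\big)=\mathbb E_kR_{k,H_{\max}}.
\]
This holds pointwise in $(s,a)$; the interchange of expectation and evaluation is justified because the critic outputs are uniformly bounded on $\Omega$ by~\eqref{e: local.ntk}. By Jensen's inequality in $L_2(\lambda_{\theta_k})$,
\[
\delta_k^2=\|\mathbb E_k R_{k,H_{\max}}-\bar r_{\theta_k}\|^2\le \mathbb E_k\|R_{k,H_{\max}}-\bar r_{\theta_k}\|^2.
\]
Lemma~\ref{thm: vanilla.critic.bound} then bounds the right-hand side by $C/(H_{\max}+1)+Cq_m$, which gives $\delta_k^2\le C(H_{\max}^{-1}+q_m)$.

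\emph{Second moment.} Write $r_{k,\mathrm{MLMC}}-\bar r_{\theta_k}=(R_{k,1}-\bar r_{\theta_k})+\mathbf 1_{\{N_k\le J\}}2^{N_k}\Delta_{N_k}$, where $\Delta_j=R_{k,2^j}-R_{k,2^{j-1}}$. Applying $(a+b)^2\le 2a^2+2b^2$ and conditioning on $N_k$ gives
\[
\sigma_k^2\le 2\,\mathbb E_k\|R_{k,1}-\bar r_{\theta_k}\|^2+2\sum_{j=1}^J 2^{-j}4^{j}\,\mathbb E_k\|\Delta_j\|^2 .
\]
The first term is $O(1+q_m)$ by Lemma~\ref{thm: vanilla.critic.bound}. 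For the level differences, the triangle inequality through $\bar r_{\theta_k}$ and the same lemma give $\mathbb E_k\|\Delta_j\|^2\le 2\,\mathbb E_k\|R_{k,2^j}-\bar r\|^2+2\,\mathbb E_k\|R_{k,2^{j-1}}-\bar r\|^2\le C2^{-j}+Cq_m$. Each level therefore contributes $2^j(C2^{-j}+Cq_m)=C+C2^jq_m$. Summing over $j\le J$ yields $CJ+C\,q_m\sum_{j\le J}2^j\le C(\log H_{\max}+H_{\max}q_m)$, which is the claimed bound $\sigma_k^2\le C[1+\log H_{\max}+H_{\max}q_m]$.

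\emph{Main obstacle.} The delicate point is the level-difference step. The triangle-inequality argument only works because Lemma~\ref{thm: vanilla.critic.bound} holds uniformly for every horizon $H\ge1$ within a single run, using the decreasing stepsize $\beta_{\zeta,h}=2/(\mu(h+2))$ and the same constant $C$ throughout. It also requires the floor $q_m$ to enter additively rather than multiplied by the horizon. If any $H$-dependence leaked into the constants, the factor $2^j$ would destroy the logarithmic sum. I would therefore check that the constant in Lemma~\ref{thm: vanilla.critic.bound} is independent of $H$ before relying on this step. I would also confirm that conditioning on $N_k$ does not alter the law of the critic run, which is immediate since $N_k$ is drawn independently before training.
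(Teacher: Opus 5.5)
Your proposal is correct and follows the paper's proof essentially step for step. For $\delta_k^2$ it uses the telescoping identity $\mathbb E_k r_{k,\mathrm{MLMC}}=\mathbb E_k R_{k,H_{\max}}$ plus Jensen; for $\sigma_k^2$ it uses the split $(a+b)^2\le 2a^2+2b^2$, the crude triangle-square bound $\mathbb E_k\|\Delta_j\|^2\le C(2^{-j}+q_m)$ with no coupling cancellation, and summation against the weights $2^{-j}4^j$. The two points you flag are exactly the facts the paper relies on, and both hold: the constants in Lemma~\ref{thm: vanilla.critic.bound} do not depend on the horizon, and under the common stepsize schedule a prefix of the shared run has the law of a standalone run.
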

Hence the MLMC construction exchanges the $\mathcal O(H_{\max})$ cost of a long vanilla critic for $\mathcal O(\log H_{\max})$ expected cost, while preserving its conditional mean and incurring only a controlled increase in the second moment.

\paragraph{Step 4.~Matching the Bias and Variance Scales:}
Finally, choose $H_{\max}\asymp K=T$ and
$\alpha=(1-\gamma)/\sqrt{KV_T}$. Lemma~\ref{thm: MLMC.critic.bound} gives the long-horizon critic bias at logarithmic expected inner cost, while Lemma~\ref{thm: NPG.bound} converts this conditional critic bias into actor bias and confines the larger MLMC second moment to the $\alpha$-weighted term in~\eqref{e: actor.gap}. Substitution yields the claimed rate because
\[
    \sqrt{V_T/T}
    =
    \mathcal O\left(
    \sqrt{\frac{1+\log T}{T}}
    +\eta_m+\gamma^B
    \right).
\]
Thus the statistical term retains the
$\widetilde{\mathcal O}(T^{-1/2})$ rate, while critic approximation and finite-width errors appear only through the final approximation floor. Details are in Appendix~\ref{appendix: main.result}.





\bibliography{iclr2027_conference}
\bibliographystyle{iclr2027_conference}
\clearpage
\appendix

\section{Optimality Gap: Proof of Lemma~\ref{lem: opt.gap}}
\label{appendix: opt.gap}

The proof adapts the KL-potential argument for approximate NPG \citep{agarwal2021theory,ganesh2025order}. We give it in full because the conditional mean, rather than the mean norm of the random critic error, is needed here. Define
\[
 \Phi_k=\mathbb E_{s\sim d_{\theta^*}}\mathrm{KL}(\pi_{\theta^*}(\cdot\mid s)\|\pi_{\theta_k}(\cdot\mid s)),
 \qquad b_k=\mathbb E_{\lambda_{\theta^*}}u_{\theta_k}(s,a).
\]
Assumption~\ref{a: score.function} and the actor update imply, pathwise,
5
\begin{align}\label{e: appendix.kl}
    \Phi_k-\Phi_{k+1}
    &=\mathbb E_{\lambda_{\theta^*}}\big[\log\pi_{\theta_{k+1}}(a\mid s)-\log\pi_{\theta_k}(a\mid s)\big]
    \nonumber\\
    &\geq\alpha b_k^\top w_k-\frac{G_2\alpha^2}{2}\|w_k\|_2^2.
\end{align}
The finite initial KL and bounded score ensure the displayed difference is integrable whenever the update has a finite first moment; the NPG second-moment bound supplies this property for the algorithm.

By Lemma~\ref{app.lem: performance.diff} and transferred compatibility,
\begin{align}\label{e: appendix.comparator}
    F^*-F(\lambda_{\theta_k})
    &\leq\mathbb E_{\lambda_{\theta^*}}A_{\theta_k}^{(r_{\theta_k})}
    \nonumber\\
    &\leq b_k^\top w^*(\theta_k)
    +\left(\mathbb E_{\lambda_{\theta^*}}
    [A_{\theta_k}^{(r_{\theta_k})}-u_{\theta_k}^\top w^*(\theta_k)]^2\right)^{1/2}
    \nonumber\\
    &\leq b_k^\top w^*(\theta_k)+\sqrt{\varepsilon_{\rm bias}}.
\end{align}
Both $b_k$ and $w^*(\theta_k)$ are $\mathcal F_k$-measurable, and $\|b_k\|_2\leq G_1$. Taking conditional expectations in~\eqref{e: appendix.kl}, inserting~\eqref{e: appendix.comparator}, and rearranging gives
\begin{align*}
    F^*-F(\lambda_{\theta_k})
    \leq{}&\sqrt{\varepsilon_{\rm bias}}
    +\frac{\Phi_k-\mathbb E_k\Phi_{k+1}}{\alpha}
    +G_1\|\mathbb E_k w_k-w^*(\theta_k)\|_2
    +\frac{G_2\alpha}{2}\mathbb E_k\|w_k\|_2^2.
\end{align*}
Take expectations and sum from $k=0$ to $K-1$. The potentials telescope, and $\Phi_K\geq0$. Division by $K$ yields~\eqref{e: actor.gap}. This proves Lemma~\ref{lem: opt.gap}. \proofend

\section{NPG Analysis: Proof of Lemma~\ref{thm: NPG.bound}}
\label{appendix: NPG.analysis}

This is a reset-trajectory adaptation of the conditional-bias approach in \citet{ganesh2025sharper}. We prove the needed reward-to-gradient bounds and the stochastic solve explicitly. Fix $k$ and condition on $\mathcal F_k$ throughout. Write $\theta=\theta_k$, $\lambda=\lambda_\theta$, $r^\circ=r_\theta$, $\bar r=\bar r_\theta$, and $\widehat r=r_{k,\mathrm{MLMC}}$. The random function $\widehat r$ is sampled before the NPG trajectories.

Let $A=\mathcal I_B(\theta)=\mathbb E\widehat{\mathcal I}^{k,h}$ and $I_\theta=\mathcal I(\theta)$. The missing tail is positive semidefinite, and
\begin{equation}\label{e: Fisher.bias}
    0\preceq I_\theta-A\preceq G_1^2\gamma^B I_d,
    \qquad \|I_\theta-A\|_{\rm op}\leq G_1^2\gamma^B.
\end{equation}
Consequently, $A\succeq aI_d$ with $a=\mu_F/2$ under the theorem's truncation condition. Every sampled matrix obeys
\begin{equation}\label{e: Fisher.MSE}
 0\preceq\widehat{\mathcal I}^{k,h}\preceq G_1^2I_d.
\end{equation}
Both statements follow directly from the bounded score and the normalized discounted weights.

For a deterministic square-integrable reward $v$, let
\begin{equation}\label{e: gradient.map}
    \mathcal Q_B(v)=(1-\gamma)\mathbb E_{\pi_\theta}
    \sum_{t=0}^{B-1}\gamma^t v(s_t,a_t)\sum_{j=0}^t u_\theta(s_j,a_j).
\end{equation}
Put $C_\gamma=G_1\sqrt{1+\gamma}/(1-\gamma)$. The identity
\[
 (1-\gamma)\sum_{t\geq0}\gamma^t(t+1)^2=\frac{1+\gamma}{(1-\gamma)^2}
\]
and Cauchy-Schwarz imply
\begin{equation}\label{e: gradient.map.bound}
    \|\mathcal Q_B(v)\|_2\leq C_\gamma\|v\|_{L_2(\lambda)}.
\end{equation}
Indeed, the norm of the summand's score sum is at most $G_1(t+1)$; apply Cauchy-Schwarz to the finite measure assigning mass $(1-\gamma)\gamma^t$ to time $t$ and then enlarge the nonnegative second sum to all times. The same argument pathwise, followed by expectation, proves the second-moment inequality
\begin{equation}\label{e: gradient.sample.second}
    \mathbb E\left\|(1-\gamma)\sum_{t=0}^{B-1}\gamma^t v(s_t,a_t)\sum_{j=0}^t u_\theta(s_j,a_j)\right\|_2^2
    \leq C_\gamma^2\|v\|_{L_2(\lambda)}^2.
\end{equation}
It assumes no independence between times within a single trajectory.

The true reward is bounded, so the gradient tail satisfies
\begin{equation}\label{e: PG.tail}
    \|\mathcal Q_B(r^\circ)-\nabla_\theta J(\theta)\|_2
    \leq G_1R_{\max}\gamma^B\left(B+\frac{1}{1-\gamma}\right).
\end{equation}
This follows from
$(1-\gamma)\sum_{t=B}^\infty\gamma^t(t+1)=\gamma^B[B+(1-\gamma)^{-1}].$

Fresh NPG trajectories are independent of $\widehat r$. Therefore, the tower property and linearity give
$\mathbb E_k\widehat g^{k,h}=\mathcal Q_B(\mathbb E_k\widehat r)$ for each inner iteration after averaging over its past. Decompose
\[
    \mathbb E_k\widehat r-r^\circ=(\mathbb E_k\widehat r-\bar r)+(\bar r-r^\circ).
\]
By~\eqref{e: gradient.map.bound},~\eqref{e: PG.tail}, and Assumption~\ref{a: neural.critic.error},
\begin{equation}\label{e: PG.bias.strong}
    \|\mathcal Q_B(\mathbb E_k\widehat r)-\nabla_\theta J(\theta)\|_2
    \leq G_1R_{\max}\tau_B+C_\gamma(\delta_k+\sqrt{\varepsilon_{\rm app}}).
\end{equation}
For the second moment,~\eqref{e: local.ntk} gives $\|\bar r\|_\infty\leq C_r:=C_0+C_\phi R$. Then
\begin{equation}\label{e: PG.MSE}
    \mathbb E_k\|r_{k,\mathrm{MLMC}}\|_{L_2(\lambda)}^2
    \leq2\sigma_k^2+2C_r^2,
    \quad
    \mathbb E_k\|\widehat g^{k,h}\|_2^2
    \leq2C_\gamma^2(\sigma_k^2+C_r^2).
\end{equation}
The second inequality also holds conditional on a fixed critic with $\|\widehat r\|_{L_2(\lambda)}^2$ in place of the averaged bound.

Condition further on the sampled critic. The hypotheses of Lemma~\ref{app.lem: gen.linear} hold with $L=G_1^2$, $a=\mu_F/2$, $q=\mathcal Q_B(\widehat r)$, and $V=C_\gamma^2\|\widehat r\|_{L_2(\lambda)}^2$. Hence
\[
    \mathbb E[w_{H_w}^k\mid\mathcal F_k,\widehat r]
    =\big[I_d-(I_d-\beta_wA)^{H_w}\big]A^{-1}\mathcal Q_B(\widehat r).
\]
The matrix in this expression is independent of $\widehat r$. Averaging over the critic thus replaces $\widehat r$ by $\mathbb E_k\widehat r$ exactly. Let $M=I_d-\beta_wA$, $w^*=I_\theta^{-1}\nabla_\theta J(\theta)$, and $\bar q=\mathcal Q_B(\mathbb E_k\widehat r)$. Then
\begin{equation}\label{e: NPG.mean.identity}
    \mathbb E_kw_k-w^*
    =-M^{H_w}w^*+(I_d-M^{H_w})A^{-1}(\bar q-Aw^*).
\end{equation}
Since $\|M^{H_w}\|_{\rm op}\leq e^{-a\beta_wH_w}$, $\|(I_d-M^{H_w})A^{-1}\|_{\rm op}\leq a^{-1}$, and
\[
    \|w^*\|_2\leq W_*:=\frac{G_1R_{\max}}{\mu_F(1-\gamma)},
\]
we obtain
\begin{align}\label{e: NPG.bias.explicit}
    \|\mathbb E_kw_k-w^*\|_2\leq{}&e^{-a\beta_wH_w}W_*
    +a^{-1}\left[G_1R_{\max}\tau_B+C_\gamma(\delta_k+\sqrt{\varepsilon_{\rm app}})+G_1^2\gamma^BW_*\right].
\end{align}
Squaring and using $2a=\mu_F$ proves the first line of~\eqref{e: npg.bounds}; the extra $\gamma^B/(1-\gamma)$ term is bounded by a constant times $\tau_B$.

Finally, the second-moment part of Lemma~\ref{app.lem: gen.linear}, averaged over the critic, yields
\[
    \mathbb E_k\|w_k\|_2^2
    \leq\left(\frac{2}{a^2}+\frac{4\beta_w}{a}\right)
    C_\gamma^2(2\sigma_k^2+2C_r^2).
\]
The inequality $\|w_k-w^*\|^2\leq2\|w_k\|^2+2W_*^2$ establishes the second line of~\eqref{e: npg.bounds}. All constants depend only on the fixed representation and policy bounds. This proves Lemma~\ref{thm: NPG.bound}. \proofend

\section{Dual Critic Analysis: Proofs of Lemmas~\ref{thm: vanilla.critic.bound} and~\ref{thm: MLMC.critic.bound}}
\label{appendix: dual.critic.analysis}

\paragraph{Proof of Lemma~\ref{thm: vanilla.critic.bound}:}
Fix $k$ and condition on $\mathcal F_k$. Write $f_k(\zeta)=\mathcal E^{\rm lin}(\theta_k,\zeta)$ and $\zeta^*=\zeta_k^*$. The linear map $J_0$ satisfies
\begin{equation}\label{e: J0.bound}
    \|J_0\|_{\mathbb R^{D_m}\to L_2(\nu)}
    =\|J_0^*\|_{L_2(\nu)\to\mathbb R^{D_m}}
    \leq\sqrt{\nu_0}\,C_\phi.
\end{equation}
The composition of the strongly convex functional $-\Psi$ with this affine map is $\mu=\mu_c\kappa$ strongly convex because $\|J_0v\|_{L_2(\nu)}^2=v^\top Gv\geq\kappa\|v\|_2^2$. The linear occupancy term preserves that curvature. Consequently, $f_k$ has a unique minimizer on the compact convex set $\Omega$.

Let $\mathcal F_{k,h}$ contain the critic history before update $h$. Direct differentiation of the linearized objective and the oracle mean gives
\begin{align}\label{e: critic.gradient.bias.identity}
    e_h&:=\mathbb E[\widehat v^{k,h}\mid\mathcal F_{k,h}]-\nabla f_k(\zeta_h^k)
    \nonumber\\
    &=-(1-\gamma)\sum_{t=B}^\infty\gamma^t\mathbb E_{\pi_{\theta_k}}g_0(s_t,a_t)
    +J_0^*\big[\nabla\Psi(r_{\rm lin}(\cdot,\zeta_h^k))-\nabla\Psi(r(\cdot,\zeta_h^k))\big].
\end{align}
Using~\eqref{e: J0.bound}, dual smoothness on the set containing both arguments, and~\eqref{e: local.ntk},
\begin{equation}\label{e: bias.critic.grad}
    \|e_h\|_2\leq C_\phi\gamma^B+\nu_0C_\phi L_c\eta_m=:d_v.
\end{equation}
The reference estimator contributes no term to this bias because its conditional mean is the required dual derivative. The trajectory part of~\eqref{e: critic.update} has norm at most $C_\phi$, so
\begin{equation}\label{e: MSE.critic.grad}
    \mathbb E[\|\widehat v^{k,h}\|_2^2\mid\mathcal F_{k,h}]\leq 2C_\phi^2+2S_\Psi^2=:V_c.
\end{equation}
These bounds make the dependence on the dual oracle explicit.

The variational inequality at the constrained minimizer gives $\langle\nabla f_k(\zeta^*),\zeta-\zeta^*\rangle\geq0$ for $\zeta\in\Omega$. Strong convexity then implies
\[
    \langle\nabla f_k(\zeta),\zeta-\zeta^*\rangle\geq\mu\|\zeta-\zeta^*\|_2^2.
\]
Projection nonexpansiveness, expansion of the square, and Young's inequality yield
\begin{align}\label{e: critic.recursion}
    \mathbb E[\|\zeta_{h+1}^k-\zeta^*\|_2^2\mid\mathcal F_{k,h}]
    &\leq\|\zeta_h^k-\zeta^*\|_2^2
    -2\beta_{\zeta,h}\langle\zeta_h^k-\zeta^*,\nabla f_k(\zeta_h^k)+e_h\rangle
    +\beta_{\zeta,h}^2V_c
    \nonumber\\
    &\leq(1-\mu\beta_{\zeta,h})\|\zeta_h^k-\zeta^*\|_2^2
    +\beta_{\zeta,h}\frac{d_v^2}{\mu}+\beta_{\zeta,h}^2V_c.
\end{align}
This proof works at boundary minimizers and uses only bounded second moments of the stochastic update.

To solve the recursion, set $D_h=\mathbb E_k\|\zeta_h^k-\zeta^*\|_2^2$ and $E_0=d_v^2/\mu^2$. With $\beta_{\zeta,h}=2/[\mu(h+2)]$,
\[
    D_{h+1}-E_0\leq\frac{h}{h+2}(D_h-E_0)+\frac{4V_c}{\mu^2(h+2)^2}.
\]
Let $A_c=\max\{R^2,4V_c/\mu^2\}$. We claim $D_h\leq E_0+A_c/(h+1)$. It holds at $h=0$ since $\zeta_0$ is the center of $\Omega$. If it holds at $h$, then
\[
    D_{h+1}-E_0
    \leq\frac{A_ch}{(h+2)(h+1)}+\frac{A_c}{(h+2)^2}
    \leq\frac{A_c}{h+2}.
\]
This proves
\begin{equation}\label{e: critic.final}
    \mathbb E_k\|\zeta_H^k-\zeta^*\|_2^2
    \leq\frac{A_c}{H+1}+\frac{2C_\phi^2}{\mu^2}\gamma^{2B}
    +\frac{2\nu_0^2C_\phi^2L_c^2}{\mu^2}\eta_m^2.
\end{equation}
In particular, the constants include $(\mu_c\kappa)^{-2}$.

For every $x=(s,a)$,
\[
    |r(x,\zeta_H^k)-r_{\rm lin}(x,\zeta^*)|
    \leq\eta_m+C_\phi\|\zeta_H^k-\zeta^*\|_2.
\]
Square, integrate against the probability measure $\lambda_{\theta_k}$, and use~\eqref{e: critic.final}. This proves both inequalities in~\eqref{e: vanilla.bound}. \proofend

\paragraph{Proof of Lemma~\ref{thm: MLMC.critic.bound}}
The randomized-level construction follows the MLMC telescoping principle of \citet{beznosikov2023first}; we verify both its output law and cost for the coupled critic runs used here. Write $H_{\max}=2^J$, $R_H=R_{k,H}$, and $r_*^{\rm lin}=\bar r_{\theta_k}$. The common stepsize schedule ensures that, conditional on a chosen level, the prefix of length $H$ has the same law as a standalone run of length $H$.

Since $\mathbb P(N_k=j)=2^{-j}$, we have
\begin{align}\label{e: MLMC.mean}
    \mathbb E_k r_{k,\mathrm{MLMC}}
    &=\mathbb E_k R_1+\sum_{j=1}^J2^{-j}2^j\mathbb E_k(R_{2^j}-R_{2^{j-1}})
    =\mathbb E_kR_{H_{\max}}.
\end{align}
All sums are finite. Jensen's inequality in $L_2(\lambda_{\theta_k})$ and Lemma~\ref{thm: vanilla.critic.bound} give
\[
    \delta_k^2\leq\mathbb E_k\|R_{H_{\max}}-r_*^{\rm lin}\|_{L_2(\lambda_{\theta_k})}^2
    \leq C(H_{\max}^{-1}+q_m).
\]

Let $\Delta_j=R_{2^j}-R_{2^{j-1}}$. No additional cancellation assumption on coupled runs is needed for the following bound: the two marginals and the triangle-square inequality give
\begin{equation}\label{e: MLMC.increment}
    \mathbb E_k\|\Delta_j\|_{L_2(\lambda_{\theta_k})}^2
    \leq2\mathbb E_k\|R_{2^j}-r_*^{\rm lin}\|^2
    +2\mathbb E_k\|R_{2^{j-1}}-r_*^{\rm lin}\|^2
    \leq C(2^{-j}+q_m).
\end{equation}
Each norm on the middle line is in $L_2(\lambda_{\theta_k})$. Therefore
\begin{align}\label{e: MLMC.second}
    \sigma_k^2
    &\leq2\mathbb E_k\|R_1-r_*^{\rm lin}\|^2
    +2\sum_{j=1}^J2^{-j}4^j\mathbb E_k\|\Delta_j\|^2
    \nonumber\\
    &\leq C(1+q_m)+C\sum_{j=1}^J(1+2^jq_m)
    \leq C[1+J+H_{\max}q_m].
\end{align}
This accounts for both the base estimator and the statistical contribution at each level.

Finally, the number of updates is $2^{N_k}$ on $\{N_k\leq J\}$ and one otherwise. Its expectation is
\[
    \sum_{j=1}^J2^{-j}2^j+\mathbb P(N_k>J)=J+2^{-J}\leq J+1.
\]
Shared prefixes produce all three needed outputs within this budget. Each update uses one length-$B$ trajectory and one dual-oracle call. This proves the Lemma~\ref{thm: MLMC.critic.bound}. \proofend

\section{Proof of Theorem~\ref{thm: main.result}}
\label{appendix: main.result}

The proof combines the previous results while keeping the fixed finite-width error explicit. Let $T\geq2$ and use~\eqref{e: main.parameters}. The choice of $B$ implies
\[
    \gamma^B\leq T^{-2},\qquad G_1^2\gamma^B\leq\mu_F/2.
\]
Also $T\leq H_{\max}<2T$ and $e^{-\mu_F\beta_wH_w}\leq T^{-4}$. By Lemma~\ref{thm: MLMC.critic.bound},
\begin{equation}\label{e: final.bound.critic}
    \delta_k^2\leq C(T^{-1}+\eta_m^2),\qquad
    1+\sigma_k^2\leq C V_T,
    \quad V_T=1+\log(2H_{\max})+H_{\max}(\eta_m^2+\gamma^{2B}).
\end{equation}
The constants are uniform over $k$ by the assumptions.

The tail $\tau_B$ is harmless at this horizon. To see this with discount dependence displayed, use $\log(1/\gamma)\geq1-\gamma$ and the definition of $B$ to obtain
\[
    B\leq1+\frac{2\log T+\log(2G_1^2/\mu_F)}{1-\gamma}.
\]
Since $\mu_F\leq G_1^2$, the logarithm on the right is positive. Hence
$\tau_B\leq C(1+\log T)T^{-2}/(1-\gamma)$, with a fixed constant $C$. Lemma~\ref{thm: NPG.bound} now gives
\begin{align}\label{e: final.bounds.NPG}
    \|\mathbb E_kw_k-w^*(\theta_k)\|_2
    &\leq\frac{C}{1-\gamma}\big(\sqrt{\varepsilon_{\rm app}}+\eta_m+T^{-1/2}\big),
    \nonumber\\
    \mathbb E_k\|w_k\|_2^2&\leq\frac{C V_T}{(1-\gamma)^2}.
\end{align}
The small $T^{-2}\log T$ term is absorbed into $T^{-1/2}$.

Insert these bounds into Lemma~\ref{lem: opt.gap}:
\[
    \gapK\leq\sqrt{\varepsilon_{\rm bias}}
    +\frac{\Phi_0}{\alpha K}
    +\frac{C}{1-\gamma}(\sqrt{\varepsilon_{\rm app}}+\eta_m+T^{-1/2})
    +\frac{C\alpha V_T}{(1-\gamma)^2}.
\]
For $\alpha=(1-\gamma)/\sqrt{K V_T}$ and $K=T$, the potential and second-moment terms are bounded by
\[
    \frac{C}{1-\gamma}\sqrt{\frac{V_T}{T}}
    \leq\frac{C}{1-\gamma}
    \left(\sqrt{\frac{1+\log T}{T}}+\eta_m+T^{-2}\right).
\]
This proves~\eqref{e: main.result}. The stepsize adjustment is what allows a fixed representation: a nonzero $\eta_m$ appears in the approximation floor rather than as an increasing term from the critic second moment.

The NPG subroutine uses $BH_w$ environment transitions per outer iteration. Lemma~\ref{thm: MLMC.critic.bound} bounds expected critic cost by $B(1+\log_2H_{\max})$ transitions and $1+\log_2H_{\max}$ dual-oracle calls. Thus
\begin{equation}\label{e: sample.cost}
    \mathbb E N_{\rm env}\leq KB(H_w+1+\log_2H_{\max}),\qquad
    \mathbb E N_{\rm ref}\leq c_\Psi K(1+\log_2H_{\max}),
\end{equation}
where $c_\Psi$ is the reference-draw count per oracle call. For fixed discount and regularity constants, $B,H_w=\mathcal O(\log T)$, and the total is $\widetilde{\mathcal O}(T)$. Generic duals can instead report the number of oracle calls in~\eqref{e: sample.cost}. With fixed $m,R,\zeta_0$ and their conditioning constants, taking $T=\widetilde{\mathcal O}(\epsilon^{-2})$ makes the statistical term at most $\epsilon$ above the floor. A uniformly random index from $\{0,\ldots,K-1\}$ therefore returns a policy with this expected gap. This proves Theorem~\ref{thm: main.result}. \proofend

\section{Separable Utility Duals and Examples}
\label{appendix: example.separable}

This appendix develops the separable-utility specialization introduced in Section~\ref{s: dual.representation}. We first derive the general Fenchel-dual representation and associated pseudo-reward, and then specialize the construction to entropy, smoothed entropy, KL imitation, and quadratic matching. For each example, we provide the explicit dual functional, pseudo-reward, unbiased dual-gradient estimator, and the conditions that ensure bounded rewards and the dual regularity required by Assumption~\ref{a: dual.stability}. The derivations use the Fenchel-Young identity \citep{rockafellar1997convex}, as in variational RLGU \citep{zhang2020variational} and variational divergence estimation \citep{nowozin2016fgan}.

\begin{prop}[Separable dual construction]\label{prop: separable.dual}
For a separable concave utility satisfying Definition~\ref{a: separable.utility},
the variational dual and the corresponding pseudo-reward
admit the explicit forms
\begin{equation}\label{e: separable.dual}
    \Psi(r)=-C-\int f^*(-r-c)\,d\nu,
    \qquad
    r_\theta=-f'(p_\theta)-c,
\end{equation}
where
$f^*(y)=\sup_{u\in \bR}\{uy-f(u)\}$. Furthermore, whenever differentiation under the reference integral is justified,
\begin{equation}\label{e: general.dual.gradient}
 J_0^*\nabla_r\Psi(r)=\int (f^*)'(-r-c)g_0\,d\nu.
\end{equation}
\end{prop}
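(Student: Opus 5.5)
The plan is to verify directly that the displayed $\Psi$ satisfies the exact dual representation~\eqref{e: dual.objective} of Proposition~\ref{prop: dual}, with $r_\theta$ as the minimizer, using pointwise Fenchel--Young for $f$. Write $\mathbb E_{\lambda_\theta}[r]=\int r\,p_\theta\,d\nu$. For any bounded measurable $r$, consider
\[
\mathbb E_{\lambda_\theta}[r]-\Psi(r)=C+\int\big[r\,p_\theta+f^*(-r-c)\big]d\nu .
\]
At each point, apply Fenchel--Young with $u=p_\theta$ and $y=-r-c$: $f^*(y)\ge uy-f(u)$, i.e.\ $r p_\theta+f^*(-r-c)\ge -f(p_\theta)-c\,p_\theta$. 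Integrating gives $\mathbb E_{\lambda_\theta}[r]-\Psi(r)\ge C-\int f(p_\theta)\,d\nu-\int c\,p_\theta\,d\nu=F(\lambda_\theta)$. The integrals are well defined in $(-\infty,\infty]$ because the lower bound is integrable by Definition~\ref{a: separable.utility}.

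Equality in Fenchel--Young holds exactly when $y\in\partial f(u)$. So I choose $-r-c=f'(p_\theta)$, i.e.\ $r=-f'(p_\theta)-c=r_\theta$. This function is bounded and measurable by the uniform bound on $|f'(p_\theta)+c|$, so it lies in $\mathcal R$. At interior density values $f'$ is the derivative. At attained endpoints, the definition supplies a finite one-sided derivative in $\partial f$, and this is where I must be careful that equality still holds. Equality is only needed $\nu$-a.e., and since $y\in\partial f(u)$ gives $f^*(y)=uy-f(u)$, the endpoint case also yields equality. This shows that the minimum equals $F(\lambda_\theta)$ and is attained at $r_\theta$.

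Next I must check that $r_\theta=-f'(p_\theta)-c$ agrees with the functional derivative $\nabla_\lambda F(\lambda_\theta)$. Differentiate $F$ along the segment $\lambda_\theta+t(\mu-\lambda_\theta)$ under the integral. The density perturbs by $t(q-p_\theta)$, so the directional derivative is $\int[-f'(p_\theta)-c](q-p_\theta)\,d\nu$. Dominated or monotone convergence applies through convexity of difference quotients together with the bounded derivative. This matches Assumption~\ref{a: utility} and confirms the pseudo-reward formula. Concavity of $\Psi$ follows because $f^*$ is convex.

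For~\eqref{e: general.dual.gradient}, compute the Gateaux derivative of $\Psi$ at $r$ in direction $h$: $-\int (f^*)'(-r-c)(-h)\,d\nu=\int (f^*)'(-r-c)\,h\,d\nu$. Hence $\nabla_r\Psi(r)=(f^*)'(-r-c)$ as an element of $L_2(\nu)$. Applying the adjoint $J_0^*v=\int g_0\,v\,d\nu$ gives the formula, under the stated justification for differentiating under the integral.

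The main obstacle I expect is the endpoint and measurability bookkeeping: that $r\,p_\theta+f^*(-r-c)$ is integrable or $+\infty$ uniformly, and that the equality case holds when $p_\theta$ hits the boundary of $\mathrm{dom} f$. Both are handled by the finite one-sided derivative convention and by the integrability assumptions in Definition~\ref{a: separable.utility}.
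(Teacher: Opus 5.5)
Your proposal is correct and matches the paper's proof: integrate the pointwise Fenchel--Young inequality with $u=p_\theta$, $y=-r-c$, obtain equality at $-r-c=f'(p_\theta)$ (including the one-sided derivative at attained endpoints), and deduce concavity and the gradient formula from convexity of $f^*$ and differentiation under the $\nu$-integral. You also verify that $-f'(p_\theta)-c$ represents the directional derivatives of $F$ along segments, so it coincides with $\nabla_\lambda F(\lambda_\theta)$ from Assumption~\ref{a: utility} and is not merely one of possibly several dual minimizers (a concern at endpoints, where $\partial f$ is not a singleton); the paper leaves this identification implicit, and your check is worth keeping.
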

\noindent\emph{Proof.}
Fenchel's inequality at $p=p_\theta(x)$ gives
$f(p)+f^*(-r-c)\geq(-r-c)p$. Substituting~\eqref{e: separable.dual}, rearranging, and integrating yields
\[
    \mathbb E_{\lambda_\theta}r-\Psi(r)
    =C+\int\{pr+f^*(-r-c)\}\,d\nu
    \geq C-\int f(p)\,d\nu-\int cp\,d\nu
    =F(\lambda_\theta).
\]
Equality holds at $-r-c=f'(p)$ by Fenchel-Young equality. Hence $r_\theta=-f'(p_\theta)-c$ attains the minimum. The negative integral of a convex function is concave. Differentiating it gives~\eqref{e: general.dual.gradient}. \proofend

We next specialize Proposition~\ref{prop: separable.dual} to the utilities summarized in Table~\ref{tab: utility.examples}. The table records the resulting dual functionals, exact pseudo-rewards, and boundedness conditions; below, we derive these expressions, construct unbiased estimators of the dual-gradient term in~\eqref{e: general.dual.gradient}, and verify the curvature and moment conditions required by Assumption~\ref{a: dual.stability}.

For entropy, $f(u)=u\log u$ on $u\geq0$ (with $f(0)=0$) and $f(u)=+\infty$ for $u<0$. Maximizing $uy-u\log u$ gives $u=e^{y-1}$ and $f^*(y)=e^{y-1}$. Thus $\Psi(r)=-\int e^{-r-1}d\nu$ and $r_\theta=-1-\log p_\theta$. Positive lower and finite upper density bounds make this reward bounded. If $X\sim\nu/\nu_0$, an unbiased dual-gradient sample is
\[
    Z_\Psi(\zeta)=\nu_0e^{-r(X,\zeta)-1}g_0(X).
\]
The bounded critic range ensures that its second moment is finite. The second variation of $-\Psi$ is multiplication by $e^{-r-1}$, which is bounded above and below by positive constants on a bounded reward range.

\begin{table}[h]
\centering
\footnotesize
\setlength{\tabcolsep}{3pt}
\renewcommand{\arraystretch}{1.18}
\caption{Explicit duals. Each row uses its stated reference measure; KL imitation uses $\nu=Q$. In the smoothed row, $f_\tau(u)=u\log(u+\tau)$ for $u\geq0$ and $+\infty$ otherwise. The rightmost column ensures bounded pseudo-rewards; dual stability imposes the additional range conditions explained after the table. Full conjugate and gradient calculations appear in Proposition~\ref{prop: separable.dual}.}
\label{tab: utility.examples}
\begin{tabularx}{\textwidth}{@{}L{.21\textwidth} L{.24\textwidth} L{.275\textwidth} >{\raggedright\arraybackslash}X@{}}
\toprule
Utility & Dual functional $\Psi(r)$ & Exact pseudo-reward & Boundedness conditions \\
\midrule
Entropy\newline $-\int p\log p\,d\nu$ & $-\int e^{-r-1}\,d\nu$ & $-1-\log p$ & $0<a\leq p\leq b<\infty$ \\ \addlinespace[3pt]
Smoothed entropy\newline $-\int f_\tau(p)\,d\nu$ & $-\int f_\tau^*(-r)\,d\nu$ & $-\log(p+\tau)-p/(p+\tau)$ & $\tau>0$, $0\leq p\leq b$ \\ \addlinespace[3pt]
KL imitation\newline $-\mathrm{KL}(\lambda\|Q)$ & $-\mathbb E_Q e^{-r-1}$ & $-1-\log(d\lambda/dQ)$ & $0<a\leq d\lambda/dQ\leq b$ \\ \addlinespace[3pt]
Quadratic matching\newline $-\tfrac12\int(p-q)^2d\nu$ & $\mathbb E_Q r-\tfrac12\int r^2d\nu$ & $q-p$ & $p,q$ bounded; $dQ=q\,d\nu$ \\ \addlinespace[3pt]
\bottomrule
\end{tabularx}
\end{table}

KL imitation applies the same calculation with reference measure $Q$ and density $d\lambda_\theta/dQ$. It gives $-\mathrm{KL}(\lambda_\theta\|Q)$, the stated pseudo-reward, and the estimator $e^{-r(X,\zeta)-1}g_0(X)$ for $X\sim Q$.

For smoothed entropy, $f_\tau(u)=u\log(u+\tau)$ on $[0,\infty)$ and $+\infty$ elsewhere. Its right derivative at zero is $\log\tau$, and
\[
    f_\tau'(u)=\log(u+\tau)+\frac{u}{u+\tau},\qquad
    f_\tau''(u)=\frac1{u+\tau}+\frac{\tau}{(u+\tau)^2}>0.
\]
The conjugate maximizer is zero when $y\leq\log\tau$, and otherwise is the unique positive solution of $f_\tau'(u)=y$. Hence $(f_\tau^*)'(y)$ equals that maximizer and
\[
    Z_\Psi(\zeta)=\nu_0(f_\tau^*)'(-r(X,\zeta))g_0(X),\qquad X\sim\nu/\nu_0,
\]
is unbiased. On compact reward ranges its magnitude is bounded. Strong concavity holds when $-r$ stays in a compact interval strictly above $\log\tau$, where the conjugate curvature equals $1/f_\tau''((f_\tau^*)'(-r))$ and has positive finite bounds.

For quadratic matching, take $f(u)=u^2/2$ on all of $\mathbb R$, $c=-q$, and $C=-\tfrac12\int q^2d\nu$. Since $f^*(y)=y^2/2$,~\eqref{e: separable.dual} gives
\[
    \Psi(r)=\frac12\int q^2d\nu-\frac12\int(q-r)^2d\nu
    =\mathbb E_Qr-\frac12\int r^2d\nu,
    \qquad r_\theta=q-p_\theta.
\]
The gradient is $\mathbb E_Qg_0-\int rg_0d\nu$, and an unbiased estimator is
\[
    Z_\Psi(\zeta)=g_0(Y)-\nu_0r(X,\zeta)g_0(X),
    \qquad Y\sim Q,\quad X\sim\nu/\nu_0,
\]
with independent draws. Its second moment is bounded by the local critic and feature bounds. The dual is one strongly concave and one smooth in $L_2(\nu).$ The unrestricted quadratic extension is what yields this quadratic conjugate; a nonnegative-domain restriction would instead give a positive-part conjugate.

\section{Connection to Concave Utility RL}
\label{appendix: CURL}

\begin{prop}\label{prop: finite.curl}
For finite $\mathcal S$ and $\mathcal A$, every concave occupancy utility can be written as a concave utility of $M=|\mathcal S||\mathcal A|$ normalized discounted returns. Conversely, any concave utility of finitely many expected returns is a concave occupancy utility in finite or infinite spaces. There is a continuous-state, two-action MDP and a concave occupancy utility on it that cannot be represented by any fixed finite collection of bounded-reward returns.
\end{prop}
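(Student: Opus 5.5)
The proof has three parts, taken in the order of the statement.

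\emph{Finite reduction.} Enumerate $\mathcal S\times\mathcal A=\{x_1,\ldots,x_M\}$ and take the indicator rewards $r_i=\mathbf 1_{\{x_i\}}$. By~\eqref{e: value}, the normalized return of $r_i$ is $(1-\gamma)V^{\pi}(r_i)=\lambda_\pi(\{x_i\})$. The vector of $M$ normalized returns is therefore exactly the occupancy vector. Given a concave $F$ on $\mathcal D$, define $G(z_1,\ldots,z_M)=F\big(\sum_i z_i\delta_{x_i}\big)$ on the convex image of $\mathcal D$ under this linear bijection. Then $G$ is concave as the composition of a concave function with an affine map, and $F(\lambda_\pi)=G\big((1-\gamma)V^\pi(r_1),\ldots,(1-\gamma)V^\pi(r_M)\big)$.

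\emph{Converse.} Suppose $U(\lambda)=G\big(\int r_1d\lambda,\ldots,\int r_Md\lambda\big)$ with $G$ concave and $r_i$ bounded measurable. The map $\lambda\mapsto(\int r_id\lambda)_i$ is linear on measures, so $U$ is concave on any convex $\mathcal D$. This argument does not use finiteness of the spaces.

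\emph{Counterexample.} Take $\mathcal S=[0,1]$ and actions $\{0,1\}$. Let the initial state be $s_0\sim\mathrm{Unif}[0,1]$, and make every state absorbing regardless of the action, so $\mathcal P(\cdot\mid s,a)=\delta_s$. For a policy $\pi$, the occupancy is then $\lambda_\pi(ds,a)=\pi(a\mid s)\,ds$. The achievable occupancies are therefore parametrized by the measurable functions $q(s)=\pi(1\mid s)\in[0,1]$, an infinite-dimensional family. Take the strictly concave utility $F(\lambda)=-\int_0^1\big(p_1(s)-\tfrac12\big)^2ds$, where $p_1=d\lambda(\cdot,1)/ds$; along policy occupancies this equals $-\int(q-\tfrac12)^2$. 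Suppose $F(\lambda_\pi)=G\big(V^\pi(r_1),\ldots,V^\pi(r_M)\big)$ for some bounded $r_i$ and some $G$. Each return is affine in $q$: $(1-\gamma)V^\pi(r_i)=\int\big[r_i(s,0)+q(s)(r_i(s,1)-r_i(s,0))\big]ds$. The map $q\mapsto\big(\int q\,h_i\big)_{i\le M}$, with $h_i=r_i(\cdot,1)-r_i(\cdot,0)\in L^\infty$, has a kernel of infinite codimension-complement in $L^2$. Hence there is a nonzero bounded $\phi$ that is $L^2$-orthogonal to all the $h_i$, and we may scale it so that $q_\pm=\tfrac12\pm\epsilon\phi$ lie in $[0,1]$. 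The policies $q_+$, $q_-$ and $q_0=\tfrac12$ all produce the same return vector. But $F$ takes the value $-\epsilon^2\|\phi\|^2<0$ at $q_\pm$ and the value $0$ at $q_0$, which contradicts representability by any $G$. This argument does not even need concavity of $G$.

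The main obstacle is making the counterexample rigorous. One must ensure that $F$ is well defined on a convex domain containing all occupancies, which holds because $\lambda\ll\mathrm{Leb}\otimes\mathrm{count}$ for this MDP. One must also ensure that the chosen perturbation $\phi$ is bounded. For the latter, I would pick $\phi$ from the finite-dimensional span of $\{\mathbf 1_{[k/(M+1),(k+1)/(M+1))}\}_{k\le M}$, which has dimension $M+1$. A nonzero element of this span orthogonal to the $M$ functions $h_i$ exists by dimension counting and is automatically bounded.
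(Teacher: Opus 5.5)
Your proposal is correct and follows essentially the same route as the paper: indicator rewards for the finite reduction, linearity of $\lambda\mapsto(\int r_i\,d\lambda)_i$ for the converse, and a dimension count over $M+1$ disjoint intervals to produce a bounded step-function perturbation of the uniform policy that leaves all $M$ returns unchanged but strictly lowers a quadratic density-matching utility. The differences are cosmetic. You use absorbing states where the paper resamples the next state uniformly; both keep the state marginal uniform, so $\lambda_\pi(ds,a)=\pi(a\mid s)\,ds$. Your utility penalizes only the action-$1$ density rather than both. Your explicit interval construction correctly replaces the garbled ``infinite codimension-complement'' remark.
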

\noindent\emph{Proof.}
Index the finite state-action pairs by $x_1,\ldots,x_M$, and choose $r_i(x)=\mathbf1_{\{x=x_i\}}$. Their normalized returns satisfy
\[
    J_i^\pi=(1-\gamma)\mathbb E_\pi\sum_{t\geq0}\gamma^t r_i(s_t,a_t)=\lambda_\pi(x_i).
\]
Thus $F(\lambda_\pi)=f(J_1^\pi,\ldots,J_M^\pi)$ with $f=F$ on the occupancy vectors. When CURL uses unnormalized returns $\widetilde J_i^\pi$, take $f(z)=F((1-\gamma)z)$. Concavity is preserved by this linear scaling. Conversely, fixed rewards $r_1,\ldots,r_M$ define the linear map
$\lambda\mapsto(\int r_1\,d\lambda,\ldots,\int r_M\,d\lambda)$; composing it with a concave scalarization gives a concave occupancy functional. An unnormalized-return convention adds the corresponding factor $(1-\gamma)^{-1}$.

For the infinite-state statement, take $\mathcal S=[0,1]$, $\mathcal A=\{0,1\}$, $\rho$ uniform, and transitions that draw the next state uniformly, independently of the current state and action. Let $\nu$ be Lebesgue measure times counting measure on the actions. Every stationary policy has an occupancy density
\[
    p_\pi(s,1)=\pi(1\mid s),\qquad p_\pi(s,0)=1-\pi(1\mid s).
\]
Consider the concave quadratic utility
\[
    F(\lambda_\pi)=-\frac12\sum_{a=0}^1\int_0^1[p_\pi(s,a)-1/2]^2\,ds.
\]
Fix any finite collection of bounded rewards $r_1,\ldots,r_M$. Choose $M+1$ disjoint intervals of positive length. Their vectors of integrals of
$r_i(s,1)-r_i(s,0)$ in $\mathbb R^M$ are linearly dependent. Therefore a nonzero step function $h$, constant on these intervals, satisfies
\[
    \int_0^1h(s)[r_i(s,1)-r_i(s,0)]\,ds=0,\qquad i=1,\ldots,M.
\]
Scale $h$ so $\|h\|_\infty\leq1$. The policies $\pi_0(1\mid s)=1/2$ and $\pi_1(1\mid s)=1/2+h(s)/4$ have identical normalized and unnormalized returns for all $M$ rewards. However,
\[
    F(\lambda_{\pi_0})=0,\qquad
    F(\lambda_{\pi_1})=-\frac1{16}\int_0^1h(s)^2\,ds<0.
\]
Thus, no scalarization of that finite return vector represents this utility over all stationary policies. Since the finite collection was arbitrary, the statement follows. This is an objective-class separation; the convergence theorem separately imposes its policy and critic assumptions. \proofend

\section{Technical Lemmas}
\label{appendix: technical.lemmas}

The following result is adapted from the standard discounted performance-difference lemma used in policy-gradient analysis \citep{agarwal2021theory}. Concavity supplies the extension to a utility functional, as in \citet{barakat2025global}.

\begin{lemma}[Performance Difference Lemma]\label{app.lem: performance.diff}
For policies $\pi_\theta,\pi_{\theta'}$ satisfying Assumption~\ref{a: utility},
\[
    F(\lambda_\theta)-F(\lambda_{\theta'})
    \leq\mathbb E_{\lambda_\theta}A_{\theta'}^{(r_{\theta'})}.
\]
\end{lemma}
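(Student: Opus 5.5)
The plan is to combine concavity of $F$ in the occupancy measure with the linear performance-difference lemma applied to the frozen pseudo-reward $r_{\theta'}$. Write $\lambda=\lambda_\theta$ and $\lambda'=\lambda_{\theta'}$. Both lie in the convex domain $\mathcal D$. Assumption~\ref{a: utility} says $F$ is concave and differentiable at $\lambda'$, and that its derivative $r_{\theta'}$ represents directional derivatives along segments in $\mathcal D$.

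\emph{Step 1: the supergradient inequality.} The map $t\mapsto F(\lambda'+t(\lambda-\lambda'))$ is concave on $[0,1]$. Its right derivative at $t=0$ is $\int r_{\theta'}\,d(\lambda-\lambda')$, so
\[
F(\lambda)\leq F(\lambda')+\int r_{\theta'}\,d\lambda-\int r_{\theta'}\,d\lambda'.
\]
This is the same inequality already used in the proof of Proposition~\ref{prop: dual}.

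\emph{Step 2: rewrite the linear term as a value difference.} By~\eqref{e: value}, $\int r_{\theta'}\,d\lambda-\int r_{\theta'}\,d\lambda'=(1-\gamma)\big[V^{\pi_\theta}(r_{\theta'})-V^{\pi_{\theta'}}(r_{\theta'})\big]$. Here the reward $r_{\theta'}$ is fixed and bounded by $\Rm$.

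\emph{Step 3: apply the classical performance-difference lemma} \citep{agarwal2021theory} for the fixed reward $r=r_{\theta'}$:
\[
V^{\pi_\theta}(r)-V^{\pi_{\theta'}}(r)=\frac{1}{1-\gamma}\,\mathbb E_{(s,a)\sim\lambda_\theta}A^{(r)}_{\theta'}(s,a).
\]
To justify this on compact metric spaces with a bounded measurable reward, I will telescope. I write $V^{\pi_\theta}(r)-V^{\pi_{\theta'}}(r)=\mathbb E_{\pi_\theta}\sum_t\gamma^t\big[r(s_t,a_t)+\gamma V_{\theta'}(s_{t+1})-V_{\theta'}(s_t)\big]$. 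Here $V_{\theta'}(s)=\int Q^{(r)}_{\theta'}(s,a')\pi_{\theta'}(da'\mid s)$, and the boundary term $-V_{\theta'}(s_0)$ with $s_0\sim\rho$ is what produces $V^{\pi_{\theta'}}(r)$ after taking expectations. Conditioning on $(s_t,a_t)$ and using the Bellman equation $Q^{(r)}_{\theta'}(s,a)=r(s,a)+\gamma\mathbb E[V_{\theta'}(s')\mid s,a]$ turns each summand into $A^{(r)}_{\theta'}(s_t,a_t)$. Since $|Q|\leq\Rm/(1-\gamma)$, dominated convergence permits exchanging the infinite sum and the expectation. Definition~\eqref{e: occupancy} then converts the discounted sum into $(1-\gamma)^{-1}\mathbb E_{\lambda_\theta}$.

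Combining the three steps, the factors of $1-\gamma$ cancel and give $F(\lambda_\theta)-F(\lambda_{\theta'})\leq\mathbb E_{\lambda_\theta}A^{(r_{\theta'})}_{\theta'}$. The only delicate point is Step 1 in infinite dimensions: it must be derived from the directional-derivative clause of Assumption~\ref{a: utility}, via concavity of the one-dimensional restriction, rather than from a Fréchet gradient. The measure-theoretic details of Step 3 are routine once boundedness is in hand.
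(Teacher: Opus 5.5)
Your proposal is correct and follows the paper's proof essentially step for step. Both use the supergradient inequality from concavity of $F$ at $\lambda_{\theta'}$, then the Bellman-equation telescoping (with boundedness of $V_{\theta'}^{(r)}$ killing the tail) to identify $\int r_{\theta'}\,d(\lambda_\theta-\lambda_{\theta'})$ with $\mathbb E_{\lambda_\theta}A_{\theta'}^{(r_{\theta'})}$; your extra care in Step~1, via the concave one-dimensional restriction and its right derivative, just spells out what the paper compresses into ``concavity of $F$ gives us''.
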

\noindent\emph{Proof.}
Concavity of $F$ gives us
\[
    F(\lambda_\theta)-F(\lambda_{\theta'})
    \leq\int r_{\theta'}\,d(\lambda_\theta-\lambda_{\theta'}).
\]
For any fixed bounded reward $r$, write $V_{\theta'}^{(r)}(s)=\int Q_{\theta'}^{(r)}(s,a)\pi_{\theta'}(da\mid s)$. The Bellman equation gives
\[
    A_{\theta'}^{(r)}(s,a)=r(s,a)+\gamma\mathbb E[V_{\theta'}^{(r)}(s')\mid s,a]-V_{\theta'}^{(r)}(s).
\]
Integrating along a trajectory of $\pi_\theta$, multiplying by $(1-\gamma)\gamma^t$, and summing makes the value terms telescope. Their tail vanishes since $V_{\theta'}^{(r)}$ is bounded. Thus
\[
    \mathbb E_{\lambda_\theta}A_{\theta'}^{(r)}
    =(1-\gamma)\big(V^{\pi_\theta}(r)-V^{\pi_{\theta'}}(r)\big)
    =\int r\,d(\lambda_\theta-\lambda_{\theta'}).
\]
Setting $r=r_{\theta'}$ then proves the claim. \proofend

The following deterministic Taylor statement is the local ingredient needed from the NTK viewpoint. It is consistent with the local-linearization approach of \citet{jacot2018neural,ganesh2025order,Satheesh2026GlobalCO}. Its hypotheses state the required uniform control explicitly.

\begin{lemma}[NTK bounds]\label{app.lem: NTK.bounds}
    Suppose $\|g_0(x)\|_2\leq C_\phi$ and $|r(x,\zeta_0)|\leq C_0$ uniformly in $x$. If
    \[
        \sup_{x,\zeta\in\Omega}\|\nabla_\zeta^2r(x,\zeta)\|_{\rm op}\leq C_H/\sqrt m,
    \]
    then, uniformly over $x$ and $\zeta\in\Omega$,
    \[
        |r-r_{\rm lin}|\leq\frac{C_HR^2}{2\sqrt m},\qquad
        \|\nabla_\zeta r-g_0\|_2\leq\frac{C_HR}{\sqrt m},\qquad
        |r(x,\zeta)|\leq C_0+C_\phi R+\frac{C_HR^2}{2\sqrt m}.
    \]
\end{lemma}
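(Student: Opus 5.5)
The statement to prove is the last one shown, Lemma~\ref{app.lem: NTK.bounds}. It follows from a second-order Taylor expansion of $\zeta\mapsto r(x,\zeta)$ along the segment from $\zeta_0$ to $\zeta$, with $x$ fixed. This segment lies in $\Omega$ because $\Omega$ is a ball, hence convex, centered at $\zeta_0$. The Hessian bound is assumed on all of $\Omega$, so it holds at every point of the segment.

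\textbf{Linearization error.} Fix $x$ and $\zeta\in\Omega$. Set $\Delta=\zeta-\zeta_0$, so $\|\Delta\|_2\le R$, and define $\varphi(t)=r(x,\zeta_0+t\Delta)$ for $t\in[0,1]$. Then $\varphi$ is twice differentiable, since by Assumption~\ref{a: sooth.activation} the network is $C^1$ with locally Lipschitz gradient; if only this holds, the formula below is used in its integral form, valid almost everywhere. Its derivatives are
\[
\varphi'(t)=\nabla_\zeta r(x,\zeta_0+t\Delta)^\top\Delta,
\qquad
\varphi''(t)=\Delta^\top\nabla_\zeta^2 r(x,\zeta_0+t\Delta)\Delta .
\]
Taylor's formula with integral remainder gives
\[
r(x,\zeta)-r_{\rm lin}(x,\zeta)=\varphi(1)-\varphi(0)-\varphi'(0)=\int_0^1(1-t)\varphi''(t)\,dt .
\]
Since $|\varphi''(t)|\le (C_H/\sqrt m)\,R^2$ and $\int_0^1(1-t)\,dt=1/2$, we get $|r-r_{\rm lin}|\le C_HR^2/(2\sqrt m)$.

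\textbf{Gradient deviation.} Write
\[
\nabla_\zeta r(x,\zeta)-g_0(x)=\int_0^1\nabla_\zeta^2 r(x,\zeta_0+t\Delta)\,\Delta\,dt .
\]
Taking operator norms inside the integral gives the bound $C_HR/\sqrt m$.

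\textbf{Uniform boundedness.} By the triangle inequality,
\[
|r(x,\zeta)|\le |r(x,\zeta_0)|+|g_0(x)^\top\Delta|+|r-r_{\rm lin}| .
\]
Cauchy--Schwarz bounds the middle term by $C_\phi R$. Combined with the first part and the assumed bound $C_0$, this gives $|r(x,\zeta)|\le C_0+C_\phi R+C_HR^2/(2\sqrt m)$.

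All bounds are pointwise in $x$ with constants independent of $x$, so they hold uniformly, as claimed. The only delicate point is regularity: the Hessian must exist along the segment, or at least $\nabla_\zeta r$ must be absolutely continuous there. That is what lets the integral forms be used instead of a pointwise mean-value theorem, and it follows from the smoothness of $\sigma$ together with the finiteness of the network.
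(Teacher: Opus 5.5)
Your proof is correct and follows the paper's argument exactly. Like the paper, you use the integral-remainder Taylor formula along the segment from $\zeta_0$ to $\zeta$ inside the ball $\Omega$, the fundamental theorem of calculus applied to $\nabla_\zeta r$ for the gradient bound, and the triangle inequality with Cauchy--Schwarz for the uniform bound; your extra remark on regularity along the segment is a harmless detail that the paper leaves implicit.
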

\noindent\emph{Proof.}
For $v=\zeta-\zeta_0$, Taylor's formula along the segment inside $\Omega$ gives
\[
    r(x,\zeta)-r(x,\zeta_0)-g_0(x)^\top v
    =\int_0^1(1-t)v^\top\nabla_\zeta^2r(x,\zeta_0+tv)v\,dt.
\]
Its absolute value is at most $C_H\|v\|^2/(2\sqrt m)$. Applying the fundamental theorem of calculus to the gradient gives the second bound. The final bound follows from the triangle inequality. \proofend

The Hessian condition is a quantitative hypothesis on the chosen architecture, normalization, initialization, and radius. The main theorem can instead use any certified bound $\eta_m$ in~\eqref{e: local.ntk}. All expectations in that theorem are conditional on this fixed representation; a random-initialization statement additionally requires a probability bound for the event on which these uniform inequalities hold.

The next lemma is the independent-trajectory specialization of the mean-versus-second-moment method used in \citet{ganesh2025sharper}. We give the complete proof, allowing the sampled matrix and right-hand side within an update to be dependent.

\begin{lemma}[Stochastic Linear Recursion]\label{app.lem: gen.linear}
Let
\[
    x_{h+1}=x_h-\beta(\widehat A_hx_h-\widehat b_h),\qquad x_0=0.
\]
For the history $\mathcal H_h$ before update $h$, suppose
\[
    0\preceq\widehat A_h\preceq LI_d,\quad
    \mathbb E[\widehat A_h\mid\mathcal H_h]=A\succeq aI_d,\quad
    \mathbb E[\widehat b_h\mid\mathcal H_h]=q,\quad
    \mathbb E[\|\widehat b_h\|_2^2\mid\mathcal H_h]\leq V.
\]
The constants may be conditional on a fixed auxiliary random variable, with $A$ independent of that variable. If $0<\beta\leq\min\{1/L,a/(4L^2)\}$, then
    \[
        \mathbb E x_H=[I_d-(I_d-\beta A)^H]A^{-1}q,
        \qquad
        \mathbb E\|x_H\|_2^2\leq\left(\frac{2}{a^2}+\frac{4\beta}{a}\right)V.
    \]
\end{lemma}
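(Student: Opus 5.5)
The mean identity follows by unrolling the recursion with the tower property, and the second-moment bound from a one-step Lyapunov inequality for $D_h=\mathbb E\|x_h\|_2^2$ that we then solve.

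\textbf{Mean.} Taking conditional expectations,
\[
\mathbb E[x_{h+1}\mid\mathcal H_h]=(I_d-\beta A)x_h+\beta q.
\]
This uses that $x_h$ is $\mathcal H_h$-measurable, $\mathbb E[\widehat A_h\mid\mathcal H_h]=A$ and $\mathbb E[\widehat b_h\mid\mathcal H_h]=q$. The point is that $\widehat A_h$ and $\widehat b_h$ may be dependent, but both multiply quantities that are measurable before step $h$, so the conditional means factor. Taking full expectations gives the deterministic recursion $m_{h+1}=(I_d-\beta A)m_h+\beta q$ with $m_0=0$. Hence
\[
m_H=\beta\sum_{i<H}(I_d-\beta A)^i q=[I_d-(I_d-\beta A)^H]A^{-1}q,
\]
where $A^{-1}$ exists since $A\succeq aI_d$. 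Here $A$ is treated as deterministic given the auxiliary variable. Everything is understood conditionally on that variable, and the independence of $A$ from it is used only when the result is later averaged, as in the NPG lemma.

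\textbf{Second moment.} Expand
\[
\|x_{h+1}\|^2=\|x_h\|^2-2\beta\langle x_h,\widehat A_hx_h\rangle+2\beta\langle x_h,\widehat b_h\rangle+\beta^2\|\widehat A_hx_h-\widehat b_h\|^2 .
\]
Condition on $\mathcal H_h$ and bound each term.
\begin{itemize}
\item The drift term has conditional mean $-2\beta x_h^\top Ax_h\le-2\beta a\|x_h\|^2$.
\item The cross term equals $2\beta\langle x_h,q\rangle$. Since $\|q\|^2\le V$ by Jensen, Young gives $2\beta\langle x_h,q\rangle\le\frac{\beta a}{2}\|x_h\|^2+\frac{2\beta}{a}V$.
\item For the last term, use $\|\widehat A_hx_h-\widehat b_h\|^2\le 2L^2\|x_h\|^2+2\|\widehat b_h\|^2$, which follows from $\widehat A_h\preceq LI_d$. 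Its conditional mean is therefore at most $2\beta^2L^2\|x_h\|^2+2\beta^2V$.
\end{itemize}
The condition $\beta\le a/(4L^2)$ gives $2\beta^2L^2\le\beta a/2$. Summing the three pieces,
\[
D_{h+1}\le(1-\beta a)D_h+\Big(\frac{2\beta}{a}+2\beta^2\Big)V .
\]
The step is a contraction because $\beta a\le a/L\le1$, using $a\le L$, which follows from $aI_d\preceq A\preceq LI_d$. Starting from $D_0=0$, the recursion gives
\[
D_H\le\frac1{\beta a}\Big(\frac{2\beta}{a}+2\beta^2\Big)V=\Big(\frac{2}{a^2}+\frac{2\beta}{a}\Big)V,
\]
which is within the claimed bound $(2/a^2+4\beta/a)V$.

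\textbf{Main obstacle.} The delicate point is the constant bookkeeping in the Young split. The target has coefficient $4\beta/a$ rather than $2\beta/a$, so there is slack. If one instead uses a cruder split, for example $(1+\epsilon)$-type inequalities on the last term, it must be chosen so the residual contraction stays at least $\beta a/2$. In that case the bound becomes $\frac{2}{\beta a}\cdot(\cdots)$, and one should check that this still gives at most $2/a^2+4\beta/a$. I would first try the split above and fall back to a contraction of $1-\beta a/2$, with cross-term Young constant $a/4$, if constants need adjusting.
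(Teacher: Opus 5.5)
Your proof is correct and follows the paper's argument: the same tower-property mean recursion summed as a geometric series, and the same one-step expansion using $\|\widehat A_hx_h-\widehat b_h\|_2^2\le 2L^2\|x_h\|_2^2+2\|\widehat b_h\|_2^2$, $\|q\|_2^2\le V$, and $2\beta^2L^2\le\beta a/2$. The only difference is the Young split on the cross term. The paper uses $2x_h^\top q\le a\|x_h\|_2^2+\|q\|_2^2/a$, which gives contraction $1-\beta a/2$ and exactly $(2/a^2+4\beta/a)V$; your split $\frac{a}{2}\|x_h\|_2^2+\frac{2}{a}\|q\|_2^2$ keeps contraction $1-\beta a$ and yields the slightly sharper $(2/a^2+2\beta/a)V$.
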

\noindent\emph{Proof.} Conditional expectation gives the exact mean recursion $\mathbb E x_{h+1}=(I_d-\beta A)\mathbb E x_h+\beta q.$ Summing its geometric series proves the mean identity. Since $aI_d\preceq A\preceq LI_d$ and $\beta\leq1/L$, its contraction matrix has norm at most $1-\beta a$.

For the second moment, expansion and the inequality $\|v-z\|^2\leq2\|v\|^2+2\|z\|^2$ give
\begin{align*}
    \mathbb E[\|x_{h+1}\|_2^2\mid\mathcal H_h]
    &\leq\|x_h\|_2^2-2\beta x_h^\top A x_h+2\beta x_h^\top q +2\beta^2 L^2\|x_h\|_2^2+2\beta^2V
    \\
    &\leq(1-\beta a+2\beta^2L^2)\|x_h\|_2^2
    +(\beta/a)\|q\|_2^2+2\beta^2V
    \\
    &\leq(1-\beta a/2)\|x_h\|_2^2+(\beta/a+2\beta^2)V.
\end{align*}
The middle line uses $2x_h^\top q\leq a\|x_h\|^2+\|q\|^2/a$; the final line uses $\|q\|^2\leq V$ and the stepsize condition. The contraction coefficient is nonnegative because $a\leq L$ and $\beta\leq a/(4L^2)$. Sum the scalar recursion from $x_0=0$ and use $\sum_{h\geq0}(1-\beta a/2)^h=2/(\beta a)$. This proves the claim. \proofend


The following elementary consequence of strong convexity explains the target used in Assumption~\ref{a: neural.critic.error}. Its proof is included because the dual minimizer and the best $L_2$ projection coincide only in special cases.

\begin{prop}[Approximation at the dual minimizer]\label{prop: approximation.transfer}
    Fix a policy $\theta$ and suppose $p_\theta=d\lambda_\theta/d\nu\leq D$ almost everywhere. Let
    $\mathcal E_\theta(r)=\mathbb E_{\lambda_\theta}r-\Psi(r)$ be $\mu_c$-strongly convex and $L_c$-smooth on a convex set containing $r_\theta$ and $\mathcal F_{\Omega,m}$, with $\nabla\mathcal E_\theta(r_\theta)=0$. Then its minimizer $\bar r_\theta$ over $\mathcal F_{\Omega,m}$ satisfies
    \[
        \|\bar r_\theta-r_\theta\|_{L_2(\lambda_\theta)}^2
        \leq D\frac{L_c}{\mu_c}
        \inf_{v\in\mathcal F_{\Omega,m}}\|v-r_\theta\|_{L_2(\nu)}^2.
    \]
\end{prop}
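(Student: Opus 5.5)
The plan is the standard sandwich argument for a strongly convex objective minimized over a convex subset, followed by a change of measure from $\nu$ to $\lambda_\theta$ using the density bound. The set $\mathcal F_{\Omega,m}$ is the image of the ball $\Omega$ under the affine map $\zeta\mapsto r_{\rm lin}(\cdot,\zeta)$, so it is convex. Write $\mathcal E^\star_\theta:=\mathcal E_\theta(r_\theta)$; the minimizer $\bar r_\theta$ of $\mathcal E_\theta$ over $\mathcal F_{\Omega,m}$ agrees with the definition in~\eqref{e: linear.dual.target}. All norms for the curvature constants are in $L_2(\nu)$.

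\textbf{Step 1: lower bound.} Because $\nabla\mathcal E_\theta(r_\theta)=0$ and both $r_\theta$ and $\bar r_\theta$ lie in the convex set where $\mathcal E_\theta$ is $\mu_c$-strongly convex, we get
\[
\mathcal E_\theta(\bar r_\theta)-\mathcal E^\star_\theta\geq\frac{\mu_c}{2}\|\bar r_\theta-r_\theta\|_{L_2(\nu)}^2 .
\]

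\textbf{Step 2: upper bound.} Fix any $v\in\mathcal F_{\Omega,m}$. Since $\bar r_\theta$ minimizes over this set, $\mathcal E_\theta(\bar r_\theta)\leq\mathcal E_\theta(v)$. By $L_c$-smoothness together with $\nabla\mathcal E_\theta(r_\theta)=0$,
\[
\mathcal E_\theta(v)-\mathcal E^\star_\theta\leq\frac{L_c}{2}\|v-r_\theta\|_{L_2(\nu)}^2 .
\]
Combining with Step 1 and taking the infimum over $v$ gives
\[
\|\bar r_\theta-r_\theta\|_{L_2(\nu)}^2\leq\frac{L_c}{\mu_c}\inf_{v\in\mathcal F_{\Omega,m}}\|v-r_\theta\|_{L_2(\nu)}^2 .
\]
Strictly, the smoothness step needs the segment from $r_\theta$ to $v$ to stay in the convex set, which holds by hypothesis.

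\textbf{Step 3: change of measure.} Since $\lambda_\theta\ll\nu$ with $p_\theta\leq D$,
\[
\|h\|_{L_2(\lambda_\theta)}^2=\int h^2p_\theta\,d\nu\leq D\|h\|_{L_2(\nu)}^2 .
\]
Applying this with $h=\bar r_\theta-r_\theta$ gives the claimed bound.

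\textbf{Main obstacle: existence of $\bar r_\theta$.} The only delicate point is that the minimizer exists. Strong convexity of $\mathcal E_\theta$ on the convex set $\mathcal F_{\Omega,m}$ gives uniqueness. Existence follows because $\mathcal F_{\Omega,m}$ is the image of a compact ball under a continuous linear map into $L_2(\nu)$, using $\|g_0\|\leq C_\phi$ and $\nu_0<\infty$, so it is compact. If that is not convenient, the argument can be run with any near-minimizer and an $\epsilon\to0$ limit, since Steps 1 and 2 only use the inequality $\mathcal E_\theta(\bar r_\theta)\leq\mathcal E_\theta(v)$.
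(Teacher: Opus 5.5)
Your proposal is correct and follows the paper's proof essentially step for step. Both use the strong-convexity lower bound at the stationary point $r_\theta$, then minimality of $\bar r_\theta$ over $\mathcal F_{\Omega,m}$, then the smoothness upper bound, and finally the density bound $p_\theta\le D$ to pass from $L_2(\nu)$ to $L_2(\lambda_\theta)$. Your remarks on the convexity of $\mathcal F_{\Omega,m}$ and on the existence of $\bar r_\theta$ (compactness of the affine image of $\Omega$ in $L_2(\nu)$, plus continuity of $\mathcal E_\theta$) are extra rigor that the paper leaves implicit.
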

\noindent\emph{Proof.}
Strong convexity at the unrestricted stationary minimizer and smoothness give, for every $v$ in the critic class,
\[
    \frac{\mu_c}{2}\|\bar r_\theta-r_\theta\|_{L_2(\nu)}^2
    \leq\mathcal E_\theta(\bar r_\theta)-\mathcal E_\theta(r_\theta)
    \leq\mathcal E_\theta(v)-\mathcal E_\theta(r_\theta)
    \leq\frac{L_c}{2}\|v-r_\theta\|_{L_2(\nu)}^2.
\]
The density bound transfers the left-hand norm to $L_2(\lambda_\theta)$. Take the infimum over $v$. \proofend

\section{Additional Related Works}
\label{appendix: additional.related.works}

\paragraph{Variational duality and policy-gradient analysis.} Our analysis builds on the general-utility policy-gradient identity and concave-dual representation of \citet{zhang2020variational}; Section~\ref{s: dual.representation} fixes the reward convention. Lemma~\ref{app.lem: performance.diff} combines the classical performance-difference identity with the supporting hyperplane inequality for concave utilities. Appendix~\ref{appendix: opt.gap} adapts approximate NPG analysis with transferred errors \citep{agarwal2021theory,ganesh2025order}, isolating the conditional actor bias needed for the multilevel estimator.

\paragraph{Occupancy approximation in large-state RLGU.}
\citet{barakat2023reinforcement} study linear occupancy fitting, while \citet{barakat2026global} use maximum-likelihood occupancy estimation. Their representation assumptions concern distributions; ours concern the population minimizer of a dual-reward problem, with sufficient approximation conditions given in Proposition~\ref{prop: approximation.transfer}.
The improved sampling exponent in Table~\ref{tab:comparison} depends on dual stability, reward approximation, and the stated sampling access; it does not imply that either representation class uniformly dominates the other.

\paragraph{Finite-objective CURL.} \citet{agarwal2023joint,agarwal2022curl,bai2022joint} develop finite-objective CURL through model-based and policy-gradient analyses. Since CURL also denotes occupancy-functional objectives \citep{geist}, our comparison specifically concerns the fixed-finite-return formulation. \citet{ganesh2026breakingbiasbarrierconcave} apply MLMC to a finite vector of scalarization derivatives, achieving the $\epsilon^{-2}$
sampling exponent. Proposition~\ref{prop: finite.curl} connects this result to finite RLGU, with objective-count and regularity dependence inherited from the reduction. In infinite spaces, our function-valued formulation accommodates utilities beyond finite moment summaries.

\paragraph{MLMC estimation and neural actor-critic analysis.} MLMC is an established debiasing technique
\citep{blanchet2015unbiased}. Neural actor-critic analyses use local linearization and bias-sensitive NPG bounds \citep{ganesh2025order,Satheesh2026GlobalCO}, while
\citet{naskar2026hierarchical} combines multilevel estimation with neural critics for average-reward constrained problems. We specialize these tools to stochastic concave-dual optimization with reset discounted trajectories, explicitly counting dual-reference samples and retaining neural linearization error
in the MLMC second moment and actor stepsize. Our fixed-representation guarantee also tracks feature conditioning through~\eqref{e: critic.final} and~\eqref{e: gram.trace}.

\end{document}